\newtheorem{theorem}{Theorem}
\newtheorem{definition}{Definition}
\newcommand{\K}{K} 
\newcommand{\kk}{k} 
\newcommand{\D}{D} 
\newcommand{\X}{\mathcal{X}}
\newcommand{\Y}{\mathcal{Y}}
\newcommand{\T}{\mathcal{T}}
\newcommand{\w}{\bm{\theta}}
\newcommand{\varw}{\bm{\vartheta}}
\newcommand{\W}{\bm{\Theta}}
\newcommand{\subspace}{\bm{\Phi}}
\newcommand{\x}{\mathbf{x}}
\newcommand{\p}{p}
\newcommand{\m}{m}
\newcommand{\loss}{\ell}
\newcommand{\lossspace}{\mathcal{L}}
\newcommand{\pathh}{\rho}
\newcommand{\bez}{b}
\newcommand{\tunnel}{\varrho}
\newcommand{\btun}{\bm{\xi}}
\newcommand{\btunsp}{\bm{\Xi}}
\newcommand{\paramset}{\bm{\Lambda}}
\newcommand{\pathlen}{\mathcal{S}}
\newcommand{\tangent}{\bm{\tau}} 
\newcommand{\tortho}{\bm{\kappa}}
\newcommand{\orthosys}{\bm{\Omega}}
\newcommand{\proj}{\bm{\Pi}}
\newcommand{\sselem}{\bm{\varphi}}
\newcommand{\n}{n} 
\newcommand{\lK}{\lambda_\K} 
\newcommand{\rg}{R_g} 
\newcommand{\pcom}{\bar{\w}} 
\newcommand{\noise}{\sigma}
\newcommand{\eps}{\bm{\epsilon}}
\newcommand{\var}[1]{\operatorname{Var}\left( #1 \right)}
\newcommand{\re}{R_e} 
\newcommand{\norm}[1]{\left\| #1 \right\|} 
\begin{document}

%

%
\runningauthor{Daniel Dold, Julius Kobialka, Nicolai Palm, Emanuel Sommer, David R\"ugamer, Oliver D\"urr}
\twocolumn[

\aistatstitle{Paths and Ambient Spaces in Neural Loss Landscapes}













\aistatsauthor{Daniel Dold\\HTWG Konstanz \And Julius Kobialka\\LMU Munich, MCML \And Nicolai Palm\\LMU Munich, MCML \AND Emanuel Sommer\\LMU Munich, MCML \And David R\"ugamer\\LMU Munich, MCML \And Oliver D\"urr\\HTWG Konstanz, TIDIT.ch}
\aistatsaddress{ } 
]



\begin{abstract}
Understanding the structure of neural network loss surfaces, particularly the emergence of low-loss tunnels, is critical for advancing neural network theory and practice. In this paper, we propose a novel approach to directly embed loss tunnels into the loss landscape of neural networks. Exploring the properties of these loss tunnels offers new insights into their length and structure and sheds light on some common misconceptions. We then apply our approach to Bayesian neural networks, where we improve subspace inference by identifying pitfalls and proposing a more natural prior that better guides the sampling procedure. 
\end{abstract}

\section{INTRODUCTION AND RELATED WORK} \label{sec:intro}

Studying the emergence of lower-dimensional connected structures of low-loss in the high-dimensional loss landscape of neural networks is an important research direction fostering a better understanding of neural networks. Much of the literature in this area focuses on the connectedness of optimized networks. This property referred to as \emph{mode connectivity} not only provides valuable insights into the landscape of hypotheses of a neural network, but can also be used to inform the optimization process \citep{ainsworth2023git}, improve sampling approaches in Bayesian neural networks \citep{izmailov2020subspace, dold2024bayesian}, protect against adversarial attacks \citep{Zhao2020Bridging}, improve model averaging \citep{wortsman2022model}, and guide fine-tuning \citep{lubana2023mechanistic}. 

Various forms of mode connectivity have therefore been studied. 
The most commonly investigated phenomenon is linear mode connectivity \citep{frankle2020linear,entezari2022the}. Other connectivity hypotheses include linear layerwise connectivity \citep{adilova2024layerwise,wortsman2022model}, quadratic \citep{lubana2023mechanistic} or star-shaped connections \citep{sonthalia2024deep,lin2024exploring}, geodesic mode connectivity \citep{tan2023geodesic}, parametrized curves \citep{garipov2018loss},  manifolds \citep{benton2021loss}, and general minimum energy paths \citep{draxler18a}. The latter, in particular, suggests that there is no ``loss barrier'' between modes if paths are constructed flexible enough. Current findings also suggest that shared invariances in networks induce connectivity of models, while the absence of (linear) connectivity implies dissimilar mechanisms \citep[see, e.g.,][]{lubana2023mechanistic}. 

Apart from characterizing trained networks, some research also tries to improve and better understand mode connectivity by taking neural network properties into account. One of the most common approaches is to account for parameter space symmetries \citep{tatro2020optimizing,entezari2022the,zhao2023understanding}. The initialization and type of architecture are further properties that were found to relate to the way loss valleys emerge \citep{benzing2022random}. 

\textbf{Direct Optimization of Subspaces}\, One promising approach tightly connected to mode connectivity is to embed a certain topological space into the loss landscape of the neural network and directly optimize this space. 
In contrast to the aforementioned literature, this changes the objective of the network training to find a region in the network's parameter space in which relevant model hypotheses reside. 
Using a toy model representing high-dimensional intersecting wedges, \cite{fort2019large} were able to link the two objectives and reproduce some of the loss landscape properties of real neural networks. An alternative approach by \cite{garipov2018loss} and \cite{gotmare2018using} is to directly optimize a path between two fixed modes in the actual network landscape. 

\textbf{Future Directions}\, While embedding a manifold of greater complexity into the loss landscape would be a natural extension of previous approaches, this also makes it harder to study and understand hypotheses within it. In contrast, loss tunnels or paths possess favorable properties that can be directly understood, and using a path does not limit the expressiveness of generated hypotheses in function space \citep{draxler18a}. Despite being simpler than a complex manifold embedding, theoretical characteristics and the training of loss tunnels are not completely understood yet.

\subsection{Our Contributions}

This paper broadens the understanding of loss paths and tunnels by making the following contributions:\\
1. We propose a flexible method to directly embed loss tunnels into neural loss landscapes. Our approach can be trained with an arbitrary number of control points while also being modularly applicable to all types of neural networks. In contrast, existing approaches are more invasive in their adoption, e.g., requiring changes in the standard implementation of common layers.\\
2. We provide new insights into the nature of loss paths and tunnels, in particular their length, their optimization, benefits, and other important properties. \\
3. Using these insights and their implementation, we demonstrate how to advance subspace inference in Bayesian neural networks, an application that greatly benefits from such loss tunnels. For this, we also propose a matching and more intuitive prior that allows for better guidance of sampling procedures. 


\section{LOSS PATHS AND TUNNELS} \label{sec:pathtunnel}

\subsection{Notation and Objective}

In this work, we consider neural networks $f_{\w}:\X \to \Y$ mapping features $\x \in \X \subseteq \mathbb{R}^\p$ to an outcome space $\Y \subseteq \mathbb{R}^\m$. The network is parametrized with weights $\w \in \W \subseteq \mathbb{R}^\D$, where $\D$ is typically very large and trained to minimize a loss function. In this work, we usually consider the loss function as a continuous function of the parameters: $\loss:\W \to \lossspace \subseteq \mathbb{R}$.

\textbf{Objective}\, Our goal is to construct a lower-dimensional connected structure $\subspace \subseteq \mathbb{R}^\K$ of low loss in the high-dimensional loss surface $\W$, where typically $\K \ll \D$. As discussed in \cref{sec:intro}, we will focus on paths and tunnels in this work. To formalize these structures, we define the following.

\begin{definition}[Loss Path] \label{def:path}
    Let $\T := [0,1]$. We call $\pathh:\T\to\lossspace$ a loss path if there exists a mapping $\bez:\mathcal{T}\to\W$ such that $\pathh:= \loss \circ \bez$ is a continuous function on $\T$.
\end{definition}

In \cref{def:path}, $\bez$ is used as the actual curve embedded into the neural network loss surface. While $\pathh$ is a loss path in the sense that it yields loss values for a given ``time'' $t$, this is just a byproduct of the curve $\bez$ describing an ensemble of model parametrizations on $\T$, whose performance in turn is evaluated with $\loss$.

In contrast to a loss path, defined on $\T$, we consider a loss tunnel to be a multidimensional object.
\begin{definition}[Loss Tunnel] \label{def:tunnel}
    Let $\btunsp \subset \mathbb{R}^{\K-1}$ be a compact space on $\mathbb{R}^{\K-1}$. We call $\tunnel: \mathcal{T}\times\btunsp \to \lossspace$ a $\K$-dimensional loss tunnel if for every $\btun\in\btunsp$, $\tunnel(\cdot,\btun):\T\to\lossspace$ defines a loss path on $\T$.
\end{definition}
In other words, we obtain the loss tunnel by ``lifting'' a path into the product space and for every $t\in\mathcal{T}$ on the path, $\tunnel(t,\cdot)$ describes a manifold (the ambient space) embedded in $\mathbb{R}^{K-1}$. Every combination of values $\btun\in\btunsp$, in turn, defines a loss path in this tunnel. 

Note that while we define loss paths and tunnels on compact spaces, this is merely to convey the idea of low-loss regions in $\W$ similar to the idea of \emph{loss barriers}, and there is typically no actual boundary in this space. 

\subsection{Path Optimization} \label{sec:pathoptim}


Following \cref{def:tunnel}, the search for a loss tunnel naturally leads to finding a loss path first. A property shared by all existing path approaches is to define the \emph{initial} and \emph{terminal point} of the loss path. A common way to do this is to set $\w_0 := \bez(0)$ and $\w_1 := \bez(1)$, where $\w_0,\w_1\in\W$ represent two optimized neural networks solutions. Given these endpoints, a low-loss path can be constructed by finding a parameterized curve of constantly low-loss between those points \citep{draxler18a, garipov2018loss}. The hope is that this path contains functionally diverse, well-performing models. 

As in previous approaches, we also use a parametrized regular curve $\bez$ to construct the loss path. We, however, opt for a more flexible way to learn this curve, not restricting ourselves to pre-defined endpoints, and thereby also constructing a method that can be adapted more flexibly (cf.~\cref{fig:dold_vs_izmailov}). More specifically, we do not fix the endpoints of $\bez$, but instead, define $\bez$ based on $\K+1$ control points $\w_{0}, \ldots, \w_{\K} \in \W$ that can freely move in $\W$. We then optimize this set of control points $\bm{\Lambda} := \{ \w_{0}, \dots, \w_{\K}\}$ that define the curve $\bez_{\paramset}$ by integrating over the current path loss
\begin{equation}
\label{eq:lossfunc}
    L(\bm{\Lambda}) := \int_0^1 \rho_{\paramset}(t) dt = \int_0^1 \ell(\bez_{\paramset}(t))dt.
\end{equation} 
While optimizing \cref{eq:lossfunc} corresponds to the optimization of a curve with constant ``speed'' $\| \bez_{\paramset}'(t) \|$, an unrestricted approach would be to optimize over its expectation uniformly in the space $\W$ \citep{garipov2018loss}: 
\begin{equation}
\label{eq:lossfunc_equal}
\tilde{L}(\paramset) := \int_0^1 \!\!\ell(\bez_{\paramset}(t)) \| \bez_{\paramset}'(t) \|dt \cdot \left( \int_0^1 \!\!\| \bez_{\paramset}'(t) \|dt \right)^{-1}\!\!.
\end{equation}
Since we use a parameterized regular curve, which has non-constant speed, we could reparameterize $t$ with the curve length $s$ through the relation $s = \int_0^t \| \bez_{\paramset}'(u) \|du$ and sample $s \sim \text{Uniform}(0, \pathlen)$ instead of $t$ to convert \cref{eq:lossfunc_equal} into \cref{eq:lossfunc}. However, as also discussed in \cite{garipov2018loss}, this approach is computationally intractable due to the dependency of $s$ on $\paramset$. In particular, recovering $t$ from $s$ requires a root solver, which is in general not differentiable. 

\textbf{Importance Sampling}\, While previous approaches \citep{garipov2018loss,izmailov2020subspace,dold2024bayesian} focused on implementing paths using the expectation over $t$ as in \cref{eq:lossfunc}, a better understanding of loss paths also requires investigating the differences to an optimization using \cref{eq:lossfunc_equal}.
To this end, we developed an importance sampling approach that allows optimizing \cref{eq:lossfunc_equal}. Details are given in the following section. As later results will demonstrate, there is negligible difference between the optimization of both objectives \cref{eq:lossfunc,eq:lossfunc_equal}. For this reason, we will thus focus on the objective in \cref{eq:lossfunc} in the following.

\subsection{Practical Implementation}

In practice, we suggest implementing the parametrized curve and, hence, the loss path in a way that allows easy adoption for any network architecture. 

\textbf{Path parametrization}\, A flexible and recently promoted approach is to use a B\'{e}zier curve \citep[see, e.g.,][]{garipov2018loss}, which we define as 
\begin{equation} \label{eq:bezier}
    \bez_{\paramset}(t) = \sum_{\kk=0}^\K \omega_{\kk}(t) \w_{\kk}  := \sum_{\kk=0}^\K \binom{\K}{\kk} (1-t)^{\K-\kk} t^{\kk} \w_{\kk},
\end{equation}
$t\in\T:=[0,1]$. Using $\bez_{\paramset}$, we define our loss path according to \cref{def:path} by setting $\pathh := \loss \circ \bez_{\paramset}$ (since \cref{eq:bezier} maps from $\T$ to $\W$ and $\ell$ maps from $\W$ to $\lossspace$).

\textbf{Loss computation}\, The advantage of a parametrized curve becomes evident when computing the path loss in \cref{eq:lossfunc}. We do this by drawing $M$ samples from $t\sim U(0,1)$ in every forward pass and approximate the expectation $\int_0^1 \ell(\bez_{\paramset}(t))dt = \mathbb{E}_{t\sim U(0,1)}[\ell(\bez_{\paramset}(t))]$ using Monte Carlo. In practice, using $M=1$ is often sufficient. Although increasing $M$ reduces gradient noise, we observed no significant performance improvements with larger values. 

\textbf{Model updates}\, Based on the previous loss computation discussion, we can derive the gradients for model updates for a given time point $t^\ast$ via the chain rule
$\nabla_{\paramset}L := \left\{ \frac{\partial{L}}{\partial{\w_0}},\ldots,\frac{\partial{L}}{\partial{\w_{\K}}} \right\}
= 
\left\{ \frac{\partial{L}}{\partial{\bez_{\paramset}(t^\ast)}} \frac{\partial{\bez_{\paramset}(t^\ast)}}{\partial{\w_{k}}}\right\}_{k \color{black}=0,\ldots,K}$,
where the first term $\frac{\partial{L}}{\partial{\bez_{\paramset}(t^\ast)}}$ for each control point is the standard gradient $\nabla_{\varw} \ell$ of the loss function $\ell$ for the given point $\varw := \bez_{\paramset}(t^\ast)$ and the second term $\partial{\bez_{\paramset}(t^\ast)}/{\partial{\w_{\kk}}} \equiv \omega_k(t^\ast)$
. Since the first term is independent of $k$, cached for all $\K+1$ control points, and the second term is a scalar weighting, the model update can be performed inexpensively and efficiently using common auto-differentiation libraries. In particular, this routine \textbf{does not require to build custom layer or network modules}. Additionally, it can be straightforwardly combined with other inference approaches (such as last-layer Bayesian approximations), by only modifying a subset of model parameters using the weighting $\omega_k$. Decoupling the model architecture from parameter computations $\varw$ also aligns well with modern functional deep learning frameworks such as JAX \citep{jax2018github}. We describe this implementation in \cref{alg:subspace_cons}, exemplary using stochastic gradient descent (SGD).
\begin{figure}[ht]
    \centering
    \includegraphics[width=\linewidth]{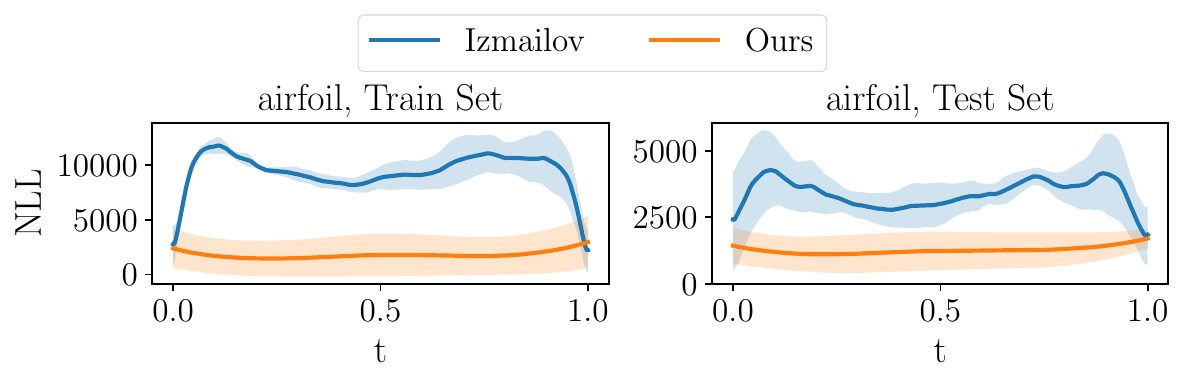}
    \vspace*{-0.5cm}
    \caption{Negative log-likelihood (NLL; y-axis) along the path, comparing the method proposed in \citet{izmailov2020subspace} to our approach across five different data splits and initializations (shaded regions represent one std.dev.).}
    \label{fig:dold_vs_izmailov}
\end{figure}

\textbf{Integrating over} $\mathcal{S}$\, As discussed in the previous section, using  \cref{eq:lossfunc} does not account for the ``speed'' of $\bez$ when computing its expectation. To directly optimize \cref{eq:lossfunc_equal}, we suggest using importance sampling by first drawing $t\sim U(0,1)$ and weighting the sample by $\|\bez_{\paramset}'(t)\| \cdot \pathlen^{-1}$, i.e., the speed at position $t$ divided by the total curve length, computed via numerical integration. Both terms are differentiable in this case. The resulting weight is again a scalar value multiplied with the gradient $\nabla_{\paramset} L$ as in the model updates. Since this introduces a computational overhead (the curve length $\pathlen$ of $\bez$ must be recomputed in every update step), we investigate this adaption of \cref{alg:subspace_cons} in \cref{sec:exp_mnist}.
\begin{algorithm}[ht]
\small
\caption{\small Path Finding}
\label{alg:subspace_cons}
\begin{algorithmic}
    \STATE \textbf{initialize} parameters $\paramset = \{\w_0, \dots, \w_{\K}\}$ randomly
    \WHILE{validation loss still decreasing}
        \FOR {each batch $\mathcal{B}$ of training data $\mathcal{D}_{\text{train}}$}
            \STATE Sample $t^\ast \sim U(0, 1)$
            \STATE Compute $\varw = \bez_{\paramset}(t^\ast)$
            \STATE Compute $\loss(\varw)$ and gradients $\nabla_{\paramset} \mathcal{L}$.
            \STATE Update $\paramset$ using SGD
        \ENDFOR
    \ENDWHILE
\STATE \textbf{return} {$\paramset$}
\end{algorithmic}
\end{algorithm}

\subsection{Path Characteristics and Dynamics} \label{sec:proppath}

For simple problems and exact optimization procedures, an optimal solution would be a collapsed curve with length $\pathlen = 0$ located at the global minimum of the loss landscape. Due to the complexity of the loss landscape and the stochastic optimization, we cannot expect the loss $\loss(\bez(t))$ to be constant along the path. 
For optimization using SGD on a simple loss function $\loss(\w)$ over weights \( \w \), it has recently been shown that the long-term probability \( p_i \) of visiting a region \( i \) with a local minimal loss follows a Boltzmann distribution \(
p_i \propto \exp\left( -\frac{E_i}{T} \right),
\) where the energy term \( E_i \) is related to the loss \( \loss(\w_i) \), and the temperature \( T \) is related to the learning rate of SGD \citep{azizian2024longrundistributionstochasticgradient}. 
Boltzmann statistics describe a system where energetic terms (loss minimization) and entropic contributions (exploration due to noise) balance to form a stationary distribution.
While the dynamics of \cref{alg:subspace_cons} are more complex compared to standard deep learning optimization, we still expect that the dynamics of the pathfinding result in two competing effects: an energetic term seeking the minimum of the loss functional \( L(\bm{\Lambda}) \) in \cref{eq:lossfunc}, and an entropic term favoring typical configurations while disfavoring non-typical configurations such as entirely straight, elongated, or collapsed paths. This balance between energy minimization and entropy maximization prevents the path from collapsing to a single point and encourages the exploration of a diverse set of low-loss configurations along the path.

\textbf{Simplified Entropic Model}\, To investigate the effect of the entropic contributions, we consider the case in which the path lies in an infinite region of constant loss so that the gradient of $\loss(\varw)$ vanishes. At first glance, this may seem like an overly strong assumption. However, in the long-time limit and without interventions during training, this assumption becomes reasonable if the loss landscape consists of minima surrounded by volumes of low-loss regions. 
Thus, to mimic the stochasticity in training, we replace the gradient $\nabla_{\paramset} L$ with a random component $\bm{\epsilon} \sim \mathcal{N}(0, \noise^2 I_\D)$ in the update of control point $\w_{\kk}$:
$\w_{\kk} \leftarrow  \w_{\kk} + \bm{\epsilon} \, \eta \omega_k(t^\ast)$,
where $\eta$ is the learning rate. This dynamic induces a diffusion process in which the relevant quantities exhibit a characteristic square-root dependence on the effective time constant, $\sqrt{\n \, \eta^2 \noise^2}$, after  $\n$  update steps. 
This model is analogous to the dynamic of a polymer chain where the control points $\w_{\kk}$ represent the monomers or beads of the chain. The center of mass of the chain relative to its starting position diffuses as $\| \pcom \| = \sqrt{\n \, \eta^2 \noise^2} \cdot \sqrt{D}/(K+1)$, as shown by the dotted line in Panel (A) of \cref{fig:scaling_law_plot1}. Also displayed are simulation results for $\D=55$, selected as an intermediate value between $\K=10$ and the maximum $\K=80$ 
and different values of 
$\noise$ averaged over 100 repetitions.
While $\| \pcom \|$ captures the overall movement of the chain, Panel (B) shows the end-to-end distance $\re := \norm{\w_K - \w_0}$, which describes the relative positions between the monomers. 
%
\begin{figure}[ht]
    \centering
    \includegraphics[width=1.\linewidth]{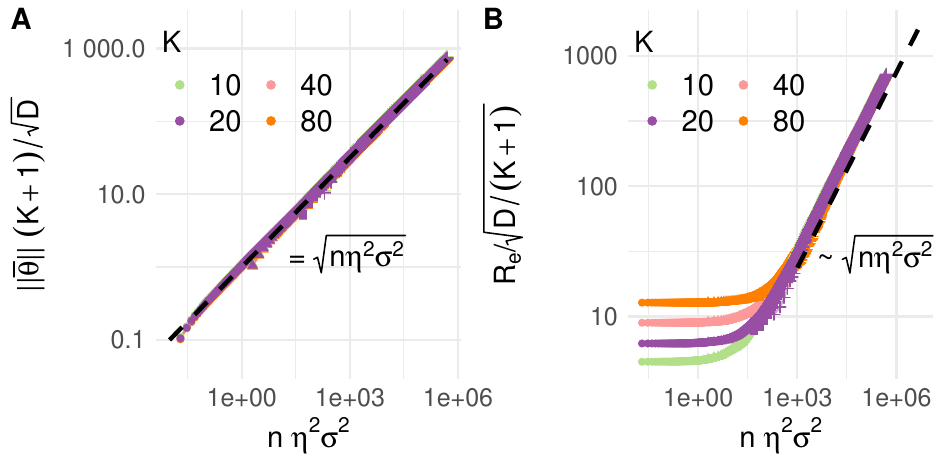}  
    \caption{Scaling behavior of path characteristics. Panel (A) shows the center of mass \(\| \pcom \|\), and Panel (B) shows the end-to-end distance \(\re\), both following square-root scaling with the effective time constant. Symbols denote different values of $\noise$: \(\circ \, \sigma = 0.1, \, \triangle \, \sigma = 1, \, \square \, \sigma = 5, \, \lozenge \, \sigma = 10\), though some values are only partially visible due to overplotting.}
    \label{fig:scaling_law_plot1}
\end{figure}
%
After a transient period, $\re$ is proportional to $\sqrt{\n \; \eta^2 \noise^2 \cdot} \sqrt{\D/(K+1)}$. Similar scaling laws are observed for quantities like
the squared radius of gyration $\rg^2 = \frac{1}{\K+1} \sum_{\kk = 0}^{\K} \| \w_{\kk} - \pcom \|^2$, $\lK = \sum_{\kk = 0}^{\K - 1} \| \w_{\kk + 1} - \w_{\kk} \|$
, and other characteristic quantities describing the chain (see \cref{sup:scaling} for more details).
In polymer physics, characteristic lengths such as $\re$, $\rg$ and $\| \pcom \|$ exhibit universal scaling laws across various conditions, from dense melts to dilute solutions, as well as in simplified models \citep{de1979scaling}. In \cref{sec:exp_scaling_behave}, we will investigate whether this also holds in neural networks. 

\subsection{Tunnel Embedding and Description}
\label{sec:tunnel_emb_lifting}

Having found a path of low loss, we now extend the path to a tunnel as described in \cref{def:tunnel}. This has various advantages, including better uncertainty quantification as discussed in \cref{sec:subspaceinf}. 

\textbf{Volume Lifting}\,
In principle, there are several options to lift the path into a higher-dimensional space. 
For example, \cite{izmailov2020subspace, dold2024bayesian} assume that every direction that the curve is taking reveals valuable insights. These directions are encoded by the subspace $\subspace = \operatorname{span}(\paramset)$. By defining $\subspace$ this way, we discard the information of $\bez$ and instead work with the hyperplane in which $\bez$ resides. To perform inference or investigate model hypotheses, we therefore do not travel along $\mathcal{T}$, but instead define a projection matrix $\proj\in\mathbb{R}^{\D \times \K}$ mapping $\proj: \subspace \to \W$, i.e., taking an element $\sselem\in\subspace$ in this hyperplane (irrespective of $\T$) and map it back to a neural network weight $\w\in\W$.

\textbf{Tunnel Lifting}\,
We propose tunnel over volume lifting because only a small portion of the volume in $\operatorname{span}(\paramset)$ has a low loss. 
Instead, we construct a tunnel according to \cref{def:tunnel} that preserves the time information of the original path when lifting it into a higher-dimensional space. 
More specifically, at each time point $t\in\T$, 
we can uniquely define the first dimension of a local orthogonal basis system by the tangent vector $\tangent(t) = \frac{d\bez(t)}{{d}t}$. The $\K-1$ other dimensions of the tunnel can be constructed by taking orthogonal directions $\tortho_k \in \subspace$ at each position $t$. Together, this defines a local orthogonal basis system $\orthosys(t) := \{\tangent(t), \tortho_1(t), \ldots, \tortho_{\K-1}(t)\}$ and the rank-(K$-$1) vector bundle $\btunsp = \bigcup_{t\in\T} \operatorname{span} \left\{\tortho_1(t),\ldots,\tortho_{\K-1}(t)\right\}$ defines the ambient space of the curve $\bez$. 


\textbf{Rotation Minimizing Frames}\, A natural way to construct the local orthogonal system $\orthosys$ is to use the Frenet–Serret frame \citep{frenet1852courbes, serret1851quelques}, where $\tangent(t)$ is the tangent vector, and subsequent normal, binormal and consecutive vectors are derived from $\tangent'(t), \tangent''(t), \dots, \tangent^{(\K-1)}(t)$. These derivatives are orthogonalized via a Gram-Schmidt process. This approach, however, has several drawbacks. First, the local coordinate system is undefined at any point $t$ where a derivative of $\tangent(t)$ is zero. Second, the larger $\K$, the more expensive this construction will be due to the higher-order derivatives. Lastly, a sign flip in any curvature can cause the orthogonal system to flip, which is problematic for sampling routines presented later, relying on smooth transitions between different states.
\begin{figure}[h]
    \centering
    \includegraphics[width=0.9\columnwidth]{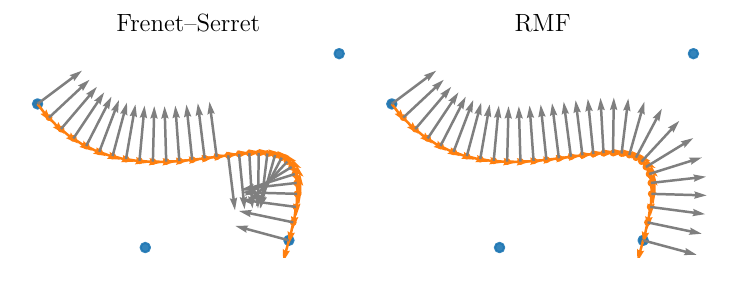}
    \caption{Frenet–Serret Frame (left) and our RMF implementation (right) on an exemplary 2D B\'{e}zier curve.}
    \label{fig:frenet_rmf}
\end{figure}
To avoid these issues, we instead use a Rotation Minimization Frame algorithm \citep[RMF; see, e.g.,][]{wang2008computation}. RMF's key idea is to minimize the rotational drift while maintaining a consistent frame orientation along the curve (cf.~\cref{fig:frenet_rmf}). For this, we construct a lookup table that starts with the initial orientation of the Frenet-Serret solution at $t=0$. The idea is to propagate this frame $\orthosys(0)$ without the tangent vector $\tangent$ along the curve and orthogonalize the frame at $t^\ast$ through the Gram-Schmidt process with the new tangent vector $\tangent(t^\ast)$ and the previously stored frame. For this, we discretize $\T$ into a large number of discrete time points $0,\ldots, t_i,\ldots, 1$ and iterate through these until we find a point $t_\text{cut}$ where $\sphericalangle( \tangent(0), \tangent(t_\text{cut}))$ exceeds $45^\circ$. This implies that a sign flip can occur for any of the $\tortho_k(t_\text{cut})$ directions. Hence, we add a new reference frame to our lookup table. 
This process is repeated until we reach $t=1$.
For given $t^\ast$, we can then create the ambient space of the path by first finding $i:t_i \leq t^\ast < t_{i+1}$, and then initialize the Gram-Schmidt process using the tangent vector $\tangent(t^\ast)$ and the previously stored frame.

\subsection{Tunnel Symmetries}

The idea of spanning a tunnel by orthogonal directions is also desirable from a permutation invariance point of view. More specifically, we can avoid permutation symmetries, which are known to dramatically increase the number of low-loss areas in neural landscapes and hinder mode connectivity \citep{pittorino2022deep}. While the control points themselves are $\mathbb{P}$-almost surely permutation symmetry-free, the path might still contain permutations and thus lead to reduced functional diversity in $f_{\w}$.

\begin{theorem}[informal]\label{thm:perm-inv-informal}
If each $\w\in\paramset$ is permutation symmetry-free, an $\epsilon$-tube, $\epsilon>0$, exists around the path $\bez_{\paramset}$ that also contains no permutation symmetries.
\end{theorem}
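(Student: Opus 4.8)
The plan is to exploit the fact that the set of permutation-symmetric weights is a finite union of lower-dimensional "coincidence" sets, that the path is a compact subset of a complement of these sets, and that compactness gives a uniform positive distance. Concretely, for a permutation $\pi$ in the (finite) symmetry group $G$ acting on $\W$ — swapping hidden units within layers and permuting the corresponding weight rows/columns — say $\w$ is \emph{$\pi$-symmetric} if applying $\pi$ to $\w$ yields the same function, i.e. $f_\w = f_{\pi\cdot\w}$; a cleaner handle is the \emph{coincidence set} $S_\pi := \{\w \in \W : \pi\cdot\w = \w\}$, which is the solution set of a system of linear equations (the fixed-point subspace of the linear $\pi$-action) and hence a proper linear subspace of $\mathbb{R}^\D$ whenever $\pi \neq \mathrm{id}$. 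First I would reduce "permutation symmetry-free" to "$\w \notin \bigcup_{\pi \neq \mathrm{id}} S_\pi$" (up to the standard caveat that generic permutation symmetries of overparametrized nets come from these fixed-point relations; this is the sense in which the paper's control points are "$\mathbb{P}$-almost surely" symmetry-free, since a proper subspace has Lebesgue measure zero).

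Next I would show the whole image $\bez_{\paramset}(\T)$ avoids $S := \bigcup_{\pi \neq \mathrm{id}} S_\pi$. This is the step that needs the hypothesis and a small argument: each $S_\pi$ is a proper linear subspace, and $\bez_{\paramset}(t) = \sum_{k=0}^\K \omega_k(t)\,\w_k$ is a convex combination (the Bernstein weights $\omega_k(t)$ are nonnegative and sum to one) of points all lying \emph{outside} $S_\pi$. Since $\w_0,\dots,\w_\K \notin S_\pi$ and $S_\pi$ is a subspace of codimension $\ge 1$, a Bézier point can land in $S_\pi$ only for $t$ in a measure-zero "bad set"; to get that it \emph{never} lands there, I would invoke genericity of the control points — for $\mathbb{P}$-almost all $\paramset$, the affine hull / relevant minors are in general position, so $\bez_{\paramset}(t)\in S_\pi$ would force an identity among the Bernstein polynomials $\omega_k$ that holds only on a finite set, and one checks directly it cannot hold on all of $[0,1]$ unless the control points satisfy a nongeneric linear relation. (If one wants to be fully rigorous, restate the theorem as "for $\mathbb{P}$-almost every choice of symmetry-free control points" — the informal statement already signals this.) Thus $\bez_{\paramset}(\T) \cap S = \emptyset$.

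Finally I would upgrade "disjoint" to "$\epsilon$-separated." The curve image $\bez_{\paramset}(\T)$ is the continuous image of the compact interval $\T=[0,1]$, hence compact; $S$ is closed (a finite union of subspaces); two disjoint sets, one compact and one closed, in $\mathbb{R}^\D$ have strictly positive distance $\epsilon := \mathrm{dist}(\bez_{\paramset}(\T), S) > 0$. Then the open $\epsilon$-tube $U_\epsilon := \{\w : \norm{\w - \bez_{\paramset}(t)} < \epsilon \text{ for some } t \in \T\}$ is disjoint from $S$, i.e. contains no permutation-symmetric weights, which is the claim. I would also note this tube can be taken to contain the RMF-generated tunnel $\btunsp$ of \cref{def:tunnel} by choosing the ambient radius below $\epsilon$, which is why the result matters for the sampling routines.

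The main obstacle is the middle step: ruling out that the Bézier curve \emph{crosses} one of the coincidence subspaces $S_\pi$ even though its control points avoid them. This is genuinely where "almost surely" is doing work — a maliciously chosen symmetric pair of control points $\w_0, \w_\K$ with $\tfrac12(\w_0+\w_\K)\in S_\pi$ would make the midpoint of a linear ($\K=1$) Bézier curve symmetric — so the honest proof either adds a genericity hypothesis on $\paramset$ or weakens the conclusion to hold $\mathbb{P}$-a.s.; the compactness/separation part is then routine.
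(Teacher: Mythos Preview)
Your approach diverges from the paper's in both the formalization of the hypothesis and the key technical step, and the obstacle you correctly flag in the middle step is precisely what the paper's formalization sidesteps.

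The paper's formal version does \emph{not} merely assume each control point avoids the fixed-point subspaces $S_\pi$; it assumes every $\w_k$ lies in the single open chamber $C=\{\w:\w_1<\w_2<\cdots<\w_\D\}$, i.e.\ all control points share the \emph{same} canonical ordering. The crucial observation is that $C$ is convex. Since $\bez_{\paramset}(t)=\sum_k\omega_k(t)\,\w_k$ is a convex combination (the Bernstein weights are nonnegative and sum to one), the entire curve stays in $C$, and the coordinate gaps satisfy
\[
\bez_{\paramset}(t)_{i+1}-\bez_{\paramset}(t)_i\ \ge\ \min_{k,\,i}\bigl(\w_{k,i+1}-\w_{k,i}\bigr)\ >\ 0
\]
uniformly in $t$. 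Taking $\epsilon$ below half this minimum gap forces the whole tube into $C$; since $C$ is a fundamental domain for $S_\D$, any nontrivial permutation sends a point of the tube out of $C$ and hence out of the tube. No genericity hypothesis, no abstract compactness-plus-distance step, no measure-zero bad sets.

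Your weaker reading of the hypothesis---each $\w_k$ merely avoids $\bigcup_{\pi\ne\mathrm{id}}S_\pi$, possibly sitting in \emph{different} chambers---is exactly what creates the crossing problem you identify: the complement of the fixed-point hyperplanes is not convex, so a convex combination of points in it can cross a wall (your $\K=1$ midpoint example). Your proposed fix (add genericity, conclude $\mathbb{P}$-a.s.) would work but is unnecessary under the paper's convex hypothesis. Note also that your stated conclusion---the tube is disjoint from $\bigcup S_\pi$---is weaker than the paper's, which asserts that no two tube points are $S_\D$-related; you could recover the stronger statement by arguing that the tube is connected and therefore confined to one chamber, but the paper gets this for free from convexity.
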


In other words, if the optimized path $\bez$ is permutation symmetry-free, then the constructed tunnel will also contain no permutation symmetries in the path's vicinity. A formal statement and proof are given in \cref{app:theory}. To ensure that models in $\paramset$ are permutation-free in each layer, one option is to sort the biases in each layer \citep{pourzanjani_2017_ImprovingIdentifiabilityb}. In our experiments, we observe that even without this explicit enforcement, functional diversity along the path is usually obtained and the optimization of $\bez$ is not prone to get stuck in permutation invariances (cf.~\cref{app:permutation_symmetry}). 
The nearly identical performances of the subspace approach with and without bias sorting in our further experiments suggest that the path itself, without explicit adjustments, consists of permutation-free solutions (cf.\ \cref{app:permutation_symmetry}).

\section{ADVANCING SUBSPACE INFERENCE} \label{sec:subspaceinf}

After optimizing $\bez$ as in \cref{alg:subspace_cons}, sampling values from this curve $\bez$ will likely yield well-performing models since these define linear combinations of the $\K+1$ optimized models $\w_k$. 
The constructed tunnels, in contrast, allow to also traverse to directions orthogonal to the curve and better explore the variability in $\w_k$. 

\textbf{Subspace Inference}\, The first to combine ideas of subspace construction and uncertainty quantification were \cite{izmailov2020subspace}. Building on the work of \cite{garipov2018loss}, \cite{izmailov2020subspace} proposed to use three models of a parameterized curve to span a plane $\subspace \subseteq \mathbb{R}^2$ (i.e., do a volume lifting) and run MCMC-based approaches in this subspace. Analogously to this idea, we study the use of the previously proposed tunnels to guide MCMC-samplers through the loss landscape and the choice of a meaningful prior.

\begin{figure}[h]
    \centering
    \includegraphics[width=\columnwidth]{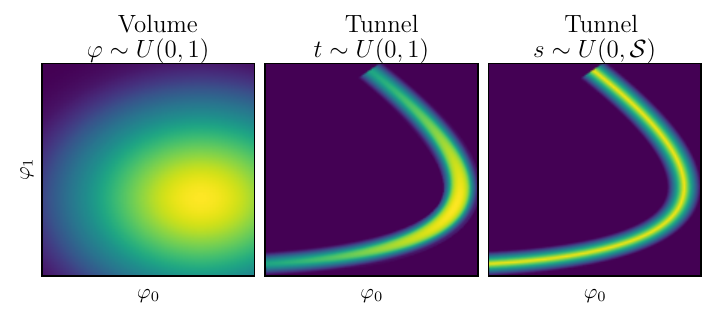}
    \caption{Comparison of a prior in the ``volume space'' (left), in the tunnel space with a uniform prior on $t$ (center) and in the tunnel space with adjusted prior through $s\sim U(0,\pathlen)$ (right). 
    }
    \label{fig:prior_comparison}
\end{figure}
%
%
%


\textbf{Tunnel Priors}\, The most common prior assumption in Bayesian deep learning is an isotropic standard normal prior. While \cite{izmailov2020subspace} argue that the prior is not crucial for good performance if ``sufficiently diffuse'', this does not hold in the aforementioned tunnels. More specifically, while using $\bm{\varphi} \sim \mathcal{N}(0, \bm{I}_{\K})$ over $\subspace$ might yield reasonable results for an unstructured subspace, this prior ignores the embedded structures of $\subspace$ in the case its construction follows the previous tunnel approach. 
Ideally, we would want high prior probability for areas close to the center loss path spanning the tunnel, 
and gradually decreasing probability density towards the ``boundaries'' of the tunnel (cf.~\cref{fig:prior_comparison}).\\ 
\underline{Tunnel prior in $t$}: Using prior $t\sim U(0,1)$ and $\btun\sim N(0,\sigma^2\bm{I}_\D)$ solves this problem. However, the tunnel prior with uniform distribution in $t$ suffers from being stretched or squeezed in $\subspace$ due to the varying traversal speed $\| \tangent(t) \|$. Additionally, bendings along the path increase the prior density inside and decrease it outside the curve (center image in ~\cref{fig:prior_comparison}).\\ \underline{Tunnel prior in $s$}: To correct for this effect, we adjust the prior by incorporating the volume change with the Jacobian determinant between $\subspace$ and $(\mathcal{T} \times \btunsp)$ with $\log | \det({\partial g(t,\btun)} / {\partial t,\btun}) |$, where $g \colon (\mathcal{T} \times \btunsp) \to \subspace$ is the mapping from the tunnel into the subspace.
%
While the adjusted prior is only known up to a normalization constant, this is sufficient for the subsequent sampling procedure.
To further reduce the computational complexity of the Jacobian determinant associated with increasing $K$, we only adjust for the different speeds of $\| \tangent(t) \|$ through sampling $s \sim U(0, \mathcal{S})$ instead of $t\sim U(0,1)$ as in the right image of \cref{fig:prior_comparison}. 






\section{NUMERICAL EXPERIMENTS}

We now empirically investigate loss paths and tunnels, and validate previous theoretical findings. For fundamental properties, we generate synthetic data to ensure a controlled environment. Performance results are obtained on common benchmark datasets. For further results and details see \cref{app:exp_details,app:furthres}.

\subsection{Scaling Behavior in Loss Landscapes} \label{sec:exp_scaling_behave}
Next, we investigate whether there are phases in the complex dynamics of the path (cf.~\cref{sec:proppath}) where the purely entropic nature of the diffusion process is preserved or if energetic terms become dominant. To this end, we use the scaling laws implied by our previous model assumption as a diagnostic tool. 
\begin{figure}
\centering
\includegraphics[width=1.\linewidth]{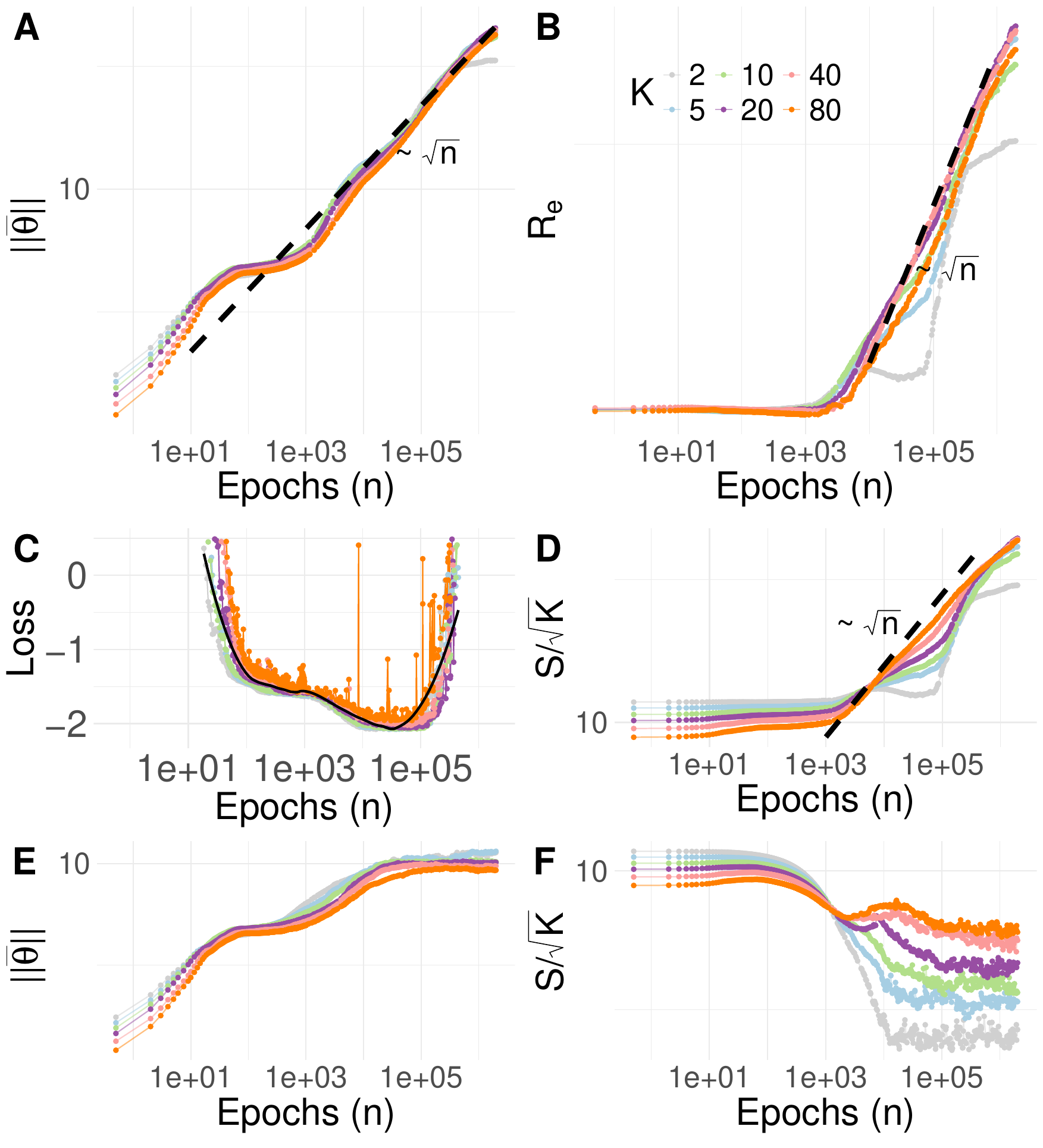}
\caption{Corresponding to \cref{fig:scaling_law_plot1}, Panels A and B show \(\|\pcom\|\) and the end-to-end distance (\(\re\)), respectively. Panel C presents the training loss function, while Panel D depicts the B\'{e}zier curve length (\(\pathlen\)). The lower panels include a weight decay of 0.1, which restricts free diffusion.}

\label{fig:length_scaling}
\end{figure}

\textbf{Results}\, In \cref{fig:length_scaling} we summarize our experiment.
%
We expect to observe an initial drift phase where the entire chain moves within the loss landscape toward a minimal region. Looking at the loss function in Panel C, we observe a plateau reached after approximately $500$ epochs. This behavior is also reflected in the magnitude of the gradient $\sum_{k=0}^K||\nabla {\partial L}/{\partial \w_k}||_2$ (cf.~\cref{fig:length_scaling_gradient_norm}).
From around 2000 epochs onward, we see the typical behavior of $\sqrt{n}$-growth in length quantities B) and D), which corresponds to the free diffusion model, as predicted in Section~\ref{sec:proppath}.
%
As shown in Panel D, the curve length $\pathlen$ also exhibits this dependency. This is consistent with the relationship $\re \leq \pathlen \leq \lK$. By analogy to polymers, we expect further that all quantities related to the distances between monomers share a similar diffusion time constant, which we also observed for other measures like $\rg$ (see \cref{sup:scaling}). 

From around $10^5$ epochs, we observe instabilities in the optimization process, while the scaling laws still hold. Thus, it is recommended to stop training before reaching this regime. In our applications, these instabilities were not problematic, as early stopping based on validation data consistently halted training before reaching this point. 
%
%
The free diffusion behavior changes when external potentials are applied, as these introduce forces that 
can significantly alter the dynamics of the system. Weight decay, for example, can be viewed as adding a quadratic penalty term \(\frac{\lambda}{2} \|\w\|^2\) to the loss function, effectively creating a harmonic potential that pulls weights towards zero. This additional potential restricts the free diffusion, and as shown in the lower panels of \cref{fig:length_scaling}, can halt the unrestricted movement of the weights or, in extreme cases, cause the curve length to collapse.

\subsection{Volume vs.~Tunnel Sampling}
\label{sec:exp_tunnelVsVolume}

\begin{figure}
    \centering
    \includegraphics[width=1.\linewidth]{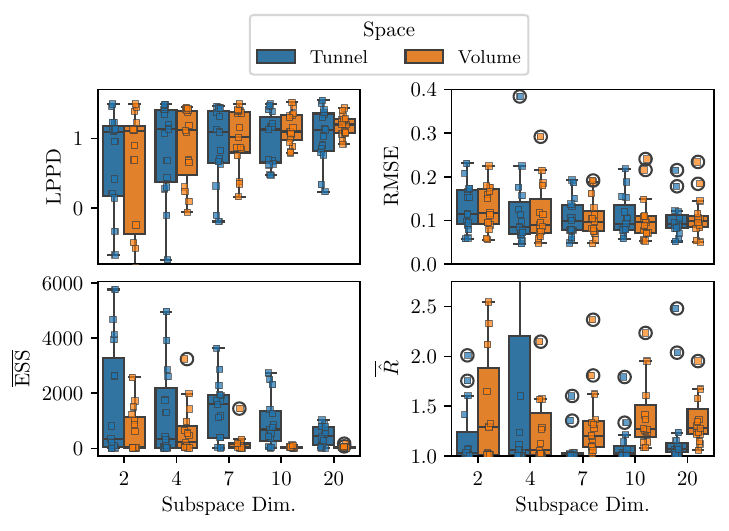}
    \caption{Performance (upper row) and sampling (bottom row) metrics on the synthetic dataset over 15 repetitions. 
    Small square points show the underlying data. $\text{ESS}$ and ${\hat{R}}$ are averages over all parameters.}
    \label{fig:performance_lambda_phi}
\end{figure}

To investigate the influence of our approach, we perform the tunnel lifting with its tunnel prior on the simulated data (cf.~\cref{app:simdata}) and investigate the performance of the sampling procedure on test data as well as the quality of the obtained samples. We compare this to the volume lifting approach using the log posterior predictive density (LPPD), root mean squared error (RMSE), averaged Effective Sample Size ($\text{ESS}$), and the averaged Gelman-Rubin $\hat{R}$ metric.

\textbf{Results}\, \cref{fig:performance_lambda_phi} visualizes the performance comparison between the two lifting approaches. While there is little change between the two spaces in prediction performance, we see a notable trend in performance improvement for larger subspace dimensions, particularly in the LPPD, and to a lesser extent in the RMSE. As $K$ increases, it also becomes more likely to obtain better generalizing models, since the worst performance across repetitions improves. 
Furthermore, the tunnel approach shows notably better average ESS and $\hat{R}$ values, while there is a clear decline in the number of effective samples when using the volume approach. This supports the hypothesis that our tunnel construction improves the problem's conditioning.


\begin{figure}
    \centering
    \includegraphics[width=\linewidth]{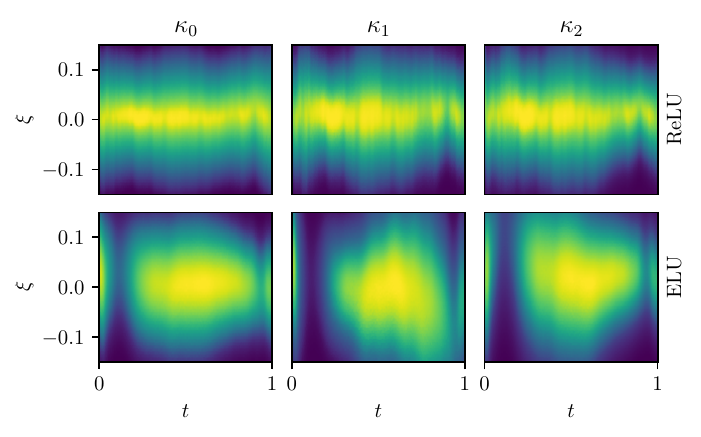}
    \caption{First four dimensions of the tunnel space (x-axis and different columns) for different activation functions (different rows) on the yacht dataset 
    with remaining $\btun$-dimensions set to zero. The x-axis represents the $t$-dimension, and the y-axis is one $\btun$-dimension. The color represents the unnormalized log posterior value.  
    }
    \label{fig:smooth}
\end{figure}

\begin{table*}[!h]
    \centering
     \caption{Performance comparisons across different methods (columns) and datasets (rows) for the LPPD and RMSE metric. The method closest to the MCMC gold standard is highlighted in \textbf{bold}. 
    }
    \label{tab:benchmark}
    \resizebox{.85\textwidth}{!}{
    \begin{tabular}{cc|r|rrrrr}
    \multicolumn{2}{c}{} & \multicolumn{1}{c}{MCMC} & \multicolumn{1}{c}{DE} & \multicolumn{1}{c}{LA}  & \multicolumn{1}{c}{ModeCon} & \multicolumn{1}{c}{Tunnel-$2$} & \multicolumn{1}{c}{Tunnel-$20$} \\
    \midrule
    \multirow[c]{5}{*}{\rotatebox[origin=c]{90}{LPPD ($\uparrow$)}} 
    & airfoil     & 0.67 ± 0.04 & -1.09 ± 0.05 & -3.05 ± 0.16 & 0.09 ± 0.05  & 0.25 ± 0.09  & \textbf{0.34} ± 0.04 \\
    & bikesharing & 0.28 ± 0.04 & -0.94 ± 0.03 & -2.83 ± 0.11 & 0.02 ± 0.03 & 0.12 ± 0.04   & \textbf{0.14} ± 0.03 \\
    & concrete    & 0.09 ± 0.18 & -0.94 ± 0.07 & -2.99 ± 0.27 & -0.29 ± 0.14 & -0.08 ± 0.05 & \textbf{-0.04} ± 0.09 \\
    & energy      & 2.18 ± 0.07 & -0.91 ± 0.07 & -2.90 ± 0.19 & 1.59 ± 0.05 & \textbf{1.74} ± 0.14   & 1.70 ± 0.28 \\
    & yacht       & 2.90 ± 0.19 & -0.91 ± 0.14 & -2.91 ± 0.63 & \textbf{1.33} ± 0.80 & 0.75 ± 3.01   & \textbf{1.33} ± 1.75 \\
    \midrule
    \multirow[c]{5}{*}{\rotatebox[origin=c]{90}{RMSE ($\downarrow$)}} 
    & airfoil       & 0.15 ± 0.02 & 0.30 ± 0.02 & 0.27 ± 0.03 & {0.22} ± 0.01  & 0.19 ± 0.01 & \textbf{0.18} ± 0.01	 \\
    & bikesharing   & 0.21 ± 0.01 & 0.26 ± 0.01 & 0.25 ± 0.01 & 0.25 ± 0.01 & \textbf{0.23} ± 0.01	 & \textbf{0.23} ± 0.01	 \\
    & concrete      & 0.25 ± 0.02 & 0.31 ± 0.01 & 0.44 ± 0.03 & 0.31 ± 0.02 & 0.30 ± 0.02	 & \textbf{0.28} ± 0.02	 \\
    & energy        & 0.03 ± 0.00 & 0.07 ± 0.01 & 0.06 ± 0.01 & {0.05} ± 0.00 & \textbf{0.04} ± 0.01	 & 0.05 ± 0.01	 \\
    & yacht         & 0.04 ± 0.01 & 0.13 ± 0.03 & 0.13 ± 0.04 & 0.06 ± 0.01 & 0.05 ± 0.01	 & \textbf{0.04} ± 0.01	 \\
    \bottomrule
    \end{tabular}
    }
\end{table*}

\subsection{Tunnel Smoothness}
\label{sec:exp_tunnelSmoothness}
%
%
Next, we investigate how the tunnel is evolving across different dimensions and different activation functions. The latter was found to have a notable influence on the subspace's smoothness, potentially affecting the sampling efficiency. For this, we visualize the first four dimensions of the created tunnels together with the unnormalized log posterior values on the yacht dataset. The results (cf.~\cref{fig:smooth}) confirm that the tunnel for ReLU networks is coarser compared to ELU activation, likely due to the non-continuously differentiable nature of ReLU. ELU, on the other hand, exhibits smoother transitions, although the multimodal structure observed in \cref{fig:smooth} can hinder efficient sampling.   
%



\subsection{Regression Benchmarks}
\label{sec:regression_benchmarks}

To analyze the performance of sampling-based tunnel inference, we extend the benchmark setup from \cite{sommer24connecting}, with a Bayesian neural network in a homoscedastic regression setting with three hidden layers, each with 16 neurons and ReLU-activation using an MCMC-based solution for the UCI benchmark datasets {airfoil}, {bikesharing}, {concrete}, {energy}, and {yacht}. We take their \textbf{MCMC} method based on NUTS as the gold standard and compare it with the mode connectivity (\textbf{ModeCon}) method from \cite{izmailov2020subspace}, and our tunnel (\textbf{Tunnel}-$\K$) approach. We further use Laplace approximation \citep[\textbf{LA};][]{daxberger2021laplace} and a non-Bayesian deep ensemble \citep[\textbf{DE};][]{lakshminarayanan} as baselines. We use five data splits to estimate the standard deviation in the performance metrics.

\textbf{Results}\, Based on the results in \cref{tab:benchmark}, our method consistently outperforms both DE and LA in terms of LPPD. Furthermore, we observe improvements over ModeCon, except for the yacht dataset. 
More importantly, increasing our tunnel's dimension $K$ leads to improvements in most cases. This suggests that the loss surface exhibits additional complexity that only a higher-dimensional subspace can capture. 
A similar trend holds for RMSE values, where we outperform LA, DE, and ModeCon, albeit by a smaller margin. 

\subsection{MNIST} \label{sec:exp_mnist}

To demonstrate that our method is also applicable to other network architectures, we run our tunnel-based approach using our tunnel-prior for the MNIST dataset to investigate whether a more complex model architecture might benefit from a ``larger'' tunnel. We investigate a subspace dimension of $\K \in \{2,5,10,20\}$ and again report LPPD values in addition to the accuracy on MNIST's standard test dataset. By altering the path optimization as discussed in \cref{sec:pathoptim}, we also investigate the difference in optimizing \cref{eq:lossfunc} or \cref{eq:lossfunc_equal}. As a baseline, we use LA.

\textbf{Results}\, \cref{fig:mnist} depicts the results, showing an increase in LPPD for an increase in subspace dimension. However, an increase in dimension is not necessarily beneficial for prediction accuracy (ACC) as the tunnel dimensions expand away from the path of well-performing ensemble members, confirmed by the empirical results. Additionally, there is little to no noticeable difference between the sampling procedures (indicated by different colors) used to approximate the integral over the loss path. Compared to the LA baseline, subspace inference yields notably better performance in terms of LPPD but does not surpass LA in its accuracy. Further analysis suggests a trade-off between optimizing LPPD and accuracy, which can, for example, be adjusted using the temperature parameter (cf.~\cref{app:mnist_temperature}). 
This again highlights the natural property of the tunnel as a way to capture surrounding uncertainty in parameter space while deviating from the path of high-performing models.

\begin{figure}
 \centering
    \includegraphics[width=\linewidth]{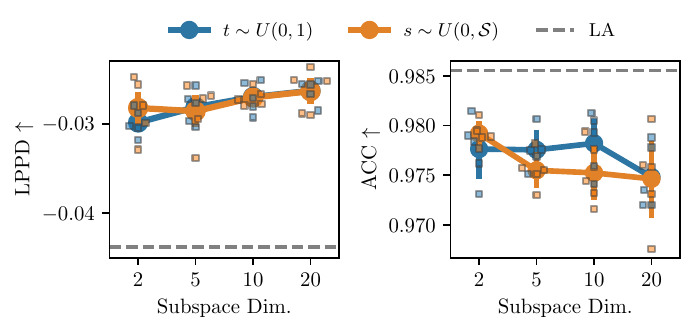}
    \caption{Mean performance on MNIST (with error bars visualizing standard error~over five different folds) using different sampling approaches to optimize the path (color) and subspace dimensions $\K$ (x-axis). The dashed line shows results obtained with Laplace Approximation (LA).}
    \label{fig:mnist}
\end{figure}

\section{CONCLUSION}

In this work, we formalized and analyzed loss paths and loss tunnels in neural network loss landscapes. Drawing on a range of theoretical and empirical perspectives, we offer new insights into these landscapes and discuss various properties relevant when resorting to such approaches. Empirical performance results suggest that the creation of paths and tunnels can be beneficial for sample-based inference.

\textbf{Limitations and Future Work}\, Our study focused on loss tunnels constructed along a pre-defined optimized path. Directly embedding tunnels into neural landscapes by, e.g., pre-defining tunnel properties a priori is a potentially interesting future direction that would allow the direct optimization of such a tunnel. 

\subsubsection*{Acknowledgements}
We thank the reviewers, Beate Sick and Georg Umlauf, for their valuable feedback and fruitful discussions.
This work was supported by the Carl-Zeiss-Stiftung in the project "DeepCarbPlanner" (grant no. P2021-08-007).


\bibliography{bibliography}

\appendix

\clearpage

\onecolumn

\section*{SUPPLEMENTARY MATERIAL}

\section{Derivation of Scaling Laws and Further Results}
\label{sup:scaling}
In the absence of a loss gradient (i.e., in a flat loss landscape), the updates to the control points \(\w_{\kk}\) are dominated by the noise term \(\bm{\epsilon}\). The change in \(\w_{\kk}\) for a single update step is given by:
\begin{equation}
    \Delta \w_{\kk} = \w^{(n+1)}_{\kk} - \w^{(n)}_{\kk} = \eta\, \bm{\epsilon}\, \binom{\K}{\kk} {t^\ast}^{\kk} (1 - t^\ast)^{\K - \kk}.
    \label{eq:wk_update}
\end{equation}
The dependence on \(\K\) arises due to how the noise in the updates is distributed among the control points of the B\'{e}zier curve. In the update rule, the factors \(\binom{\K}{\kk} t^\ast{}^{\kk} (1 - t^\ast)^{\K - \kk}\) correspond to the weights of a binomial distribution, which are sharply peaked around \(\kk = \K t^\ast\) for large \(\K\). This means that at each time, only a subset of control points near \(\kk = \K t^\ast\) receive significant updates, while others receive negligible updates.

\subsection{Effect on the Center of Mass}
The center of mass of the control points is defined by:
\begin{equation}
    \pcom = \frac{1}{\K + 1} \sum_{\kk = 0}^{\K} \w_{\kk}.
\end{equation}
The change in the center of mass \(\pcom\) from one time step to the next can be expressed as:
\begin{equation}
    \Delta \pcom = \pcom^{(n+1)} - \pcom^{(n)} = \frac{1}{\K + 1} \sum_{\kk = 0}^{\K} \Delta \w_{\kk} = \frac{\eta\, \bm{\epsilon}}{\K + 1} \sum_{\kk = 0}^{\K} \binom{\K}{\kk} {t^\ast}^{\kk} (1 - t^\ast)^{\K - \kk} = \frac{\eta\, \bm{\epsilon}}{\K + 1},
    \label{eq:wk_step}
\end{equation}
since the sum over the binomial coefficients equals 1. If we start with $\pcom^0=0$,
the center of mass after $n$ steps is given by adding up \cref{eq:wk_step} $n$ times, yielding
\begin{equation}
    \pcom^{(n)} =\frac{n \eta}{\K + 1}  \sum_{i=1}^n{\eps}.
    \label{eq:com_n}
\end{equation}
From this random process, we can calculate the expectation and variance. Assuming $\eps \sim \mathcal{N}(0,\noise \mathbf{I}_D)$, the expectation $\mathbb{E}(\pcom_n)=0$ and square root of the variance is:
\begin{equation}
    \|\pcom\| = \sqrt{\var{\pcom}} = \sqrt{n \eta^2 \noise^2} \cdot \frac{\sqrt{D}}{(K+1)}.
    \label{eq:com_n_scale}
\end{equation}
\cref{fig:com_theo} shows the near-perfect agreement between the simulation results of the distance of the center of mass from its origin and the theoretical result (solid lines) of \cref{eq:com_n_scale}. 
\begin{figure}[H]
    \centering
    \includegraphics[width=0.5\linewidth]{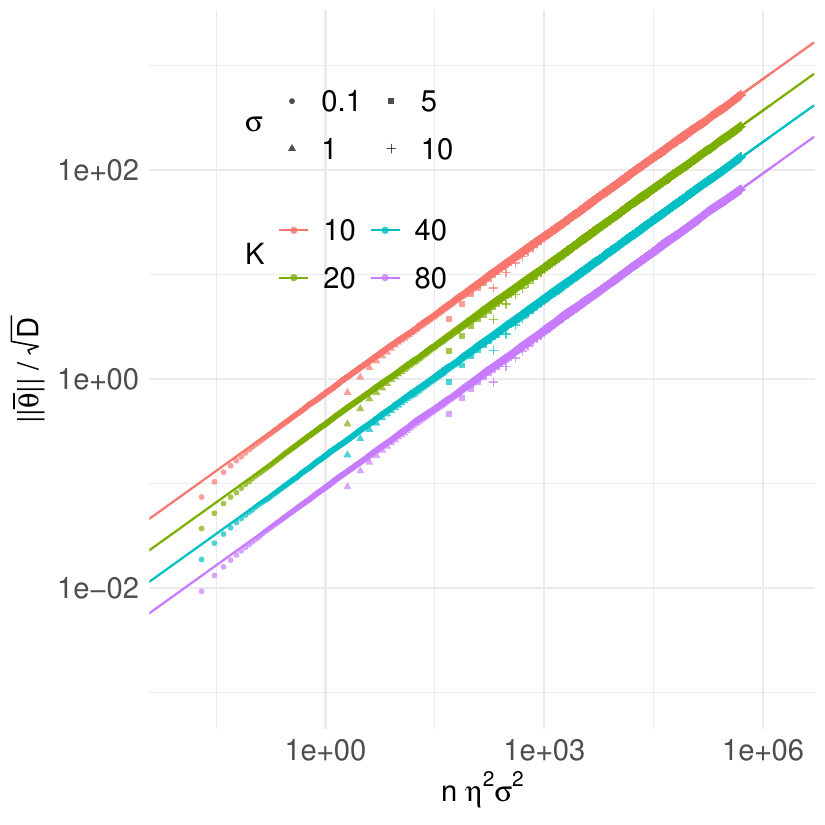}  
    \caption{Center of mass comparison of \cref{eq:com_n_scale} with simulation results for various $K$s (color) and different values of added noise $\noise$ (shapes) using starting value $\pcom^{(0)}=0$.}
    \label{fig:com_theo}
\end{figure}

\subsection{Additional Geometric Quantities and Scaling Laws}

To better understand the structure and dynamics of the control points \(\w_k\) in our model, we analyze several geometric quantities analogous to those in polymer physics. These quantities help us quantify how the control points spread and move over time due to stochastic updates. 

\paragraph{Scaling with respect to $n$  and  $D$}
For all geometric quantities discussed, the scaling with respect to the dimensionality $D$ and the number of update steps  $n$  is identical. Specifically, they all scale proportionally to  $\sqrt{n}$  and  $\sqrt{D}$. This is because the updates of the control points are driven by a random walk in a  $D$-dimensional space. Over $n$  steps, the expected displacement in a random walk grows like $\sqrt{n}$ due to the accumulation of variance from independent random steps. Similarly, in $D$  dimensions, each independent component contributes to the total displacement, leading to a scaling of $\sqrt{D}$  for the magnitude of the displacement.

\paragraph{Scaling with Respect to $K$}
To understand the scaling with respect to the number of control points $K$, we consider the average end-to-end distance after $n$ steps:
\[
\re^2 = \mathbb{E}\left[ \left\| \w^{(n)}_{K} - \w^{(n)}_0 \right\|^2 \right].
\]
Using the update rule for the control points (from \cref{eq:wk_update}), the position of control point \( \w^{(n)}_k \) after  $n$  steps is
\[
\w^{(n)}_k = \w^{(0)}_k + \sum_{i=1}^n \Delta \w^{(i)}_k,
\]
where \( \w^{(0)}_k \) is the initial position, and \( \Delta \w^{(i)}_k \) is the update at time step $i$, given by
\[
\Delta \w^{(i)}_k = \eta\, \bm{\epsilon}^{(i)}\, c_k,
\]
with  $c_k = \binom{K}{k} {t^\ast}^{k} (1 - t^\ast)^{K - k}$ . Assuming the initial positions are zero or negligible for long times, the difference between the positions of control points  $k = K$  and  $k = 0$  after  n  steps simplifies to:
\[
\w^{(n)}_{K} - \w^{(n)}_0 = \sum_{i=1}^n \left( \Delta \w^{(i)}_{K} - \Delta \w^{(i)}_0 \right) = \eta\, ( c_{K} - c_0 ) \sum_{i=1}^n \bm{\epsilon}^{(i)}.
\]
To calculate the expectation, we note that there are two independent terms we have to average. First, the random noise $\bm{\epsilon}$ introducing a random walk for which the expectation of $\norm{\sum_{i=1}^n \bm{\epsilon}^{(i)}}^2$ is $n D \sigma^2$. Second, the averaging over the uniform $t^\star \sim U(0,1)$ yields
\[
    \mathbb{E}_t{^\star} \left[\norm{c_{K} - c_0}^2\right] = 
    \int_0^1 (t^K - (1 - t)^K)^2 \; dt \propto \frac{1}{K} \text{for} K \gg 1.
\]
Thus, the expected squared end-to-end distance scales as
\begin{equation}
    \re^2 \propto \eta^2 \sigma^2 n \frac{D}{K+1}.
\end{equation}
\paragraph{Typical Length  $\delta$}
Instead of relying on exact derivations, we employ informal scaling arguments to derive the following quantities, which are then verified numerically. In scaling theory, the typical length represents the characteristic length scale of the problem \citep{de1979scaling}. In our case, the typical length scale $\delta$ of the polymer chain can be formally interpreted as the average distance between the control vectors. 
Viewing the chain of control points as a random walk of $K$ steps, the end-to-end distance relates to the typical length between control points:
\[
\re \propto \sqrt{K} \delta.
\]
Solving for $\delta$ (ignoring potentials differences between $K$ and $K+1$), we get
\[
\delta \propto \frac{ \eta\, \sigma\, \sqrt{ n D } }{ K }.
\]
\paragraph{Total Length \( \lK \)}
The total length of the chain is $K$ times the typical length
\[
\lK = \sum_{k=0}^{K-1} \left\| \w_{k+1} - \w_k \right\| \approx K \delta \propto \eta\, \sigma\, \sqrt{ n D }.
\]
This shows that \( \lK \) is independent of  $K$.

\paragraph{Radius of Gyration \( \rg \)}
The radius of gyration measures the average squared distance of the control points from their center of mass:
   \[
    \rg^2 = \frac{1}{K+1} \sum_{k=0}^{K} \norm{\w_k - \pcom}^2.
    \]
A scaling argument for the radius of gyration is as follows: if we increase the size of the system by a certain factor, the radius of gyration should increase by the same factor. Thus, the radius of gyration should scale like the typical length $\delta$. In a random walk with  $K$  steps, \( \rg \), which measures the spread of the chain’s points around its center of mass, exhibits the same dependence on $K$  as the end-to-end distance \( \re \). Therefore, we have \( \rg \propto \delta \sqrt{K} \). For a more detailed derivation, see \cite{de1979scaling}. This scaling leads to the following expression: 
    \[
    \rg \propto \sqrt{n \, \eta^2 \, \sigma^2} \cdot \sqrt{\frac{D}{\K+1}}.
    \]
Note that, when using scaling laws, we cannot distinguish between $K$ and $K+1$ precisely; however, we choose $K+1$ for better alignment with the observations from the subsequent simulation study. 

\clearpage
\subsection{Simulation Study}

We conducted a comprehensive simulation study to validate the derived scaling laws and analyze the geometric properties of loss paths in neural networks. The study systematically varies three key parameters: the number of control points ($K$), the dimensionality of the space ($D$), and the noise amplitude ($\noise$). Each combination of values is simulated over a large number of update steps $n$ and averaged over 100 repetitions, yielding approximately 60,000 data points. Despite variations in  $K$, $D$,  and  $\noise$, all quantities collapse onto a single curve when plotted against the effective time parameter  $n \eta^2 \sigma^2$ for large values of it. This is shown in \cref{fig:scaling_laws}, confirming the universal nature of the scaling laws, which are summarized in column 'Simplified Model Scaling' in \cref{tab:scaling_comparison}.
\begin{figure}[h!]
    \centering
    \includegraphics[width=0.8\textwidth]{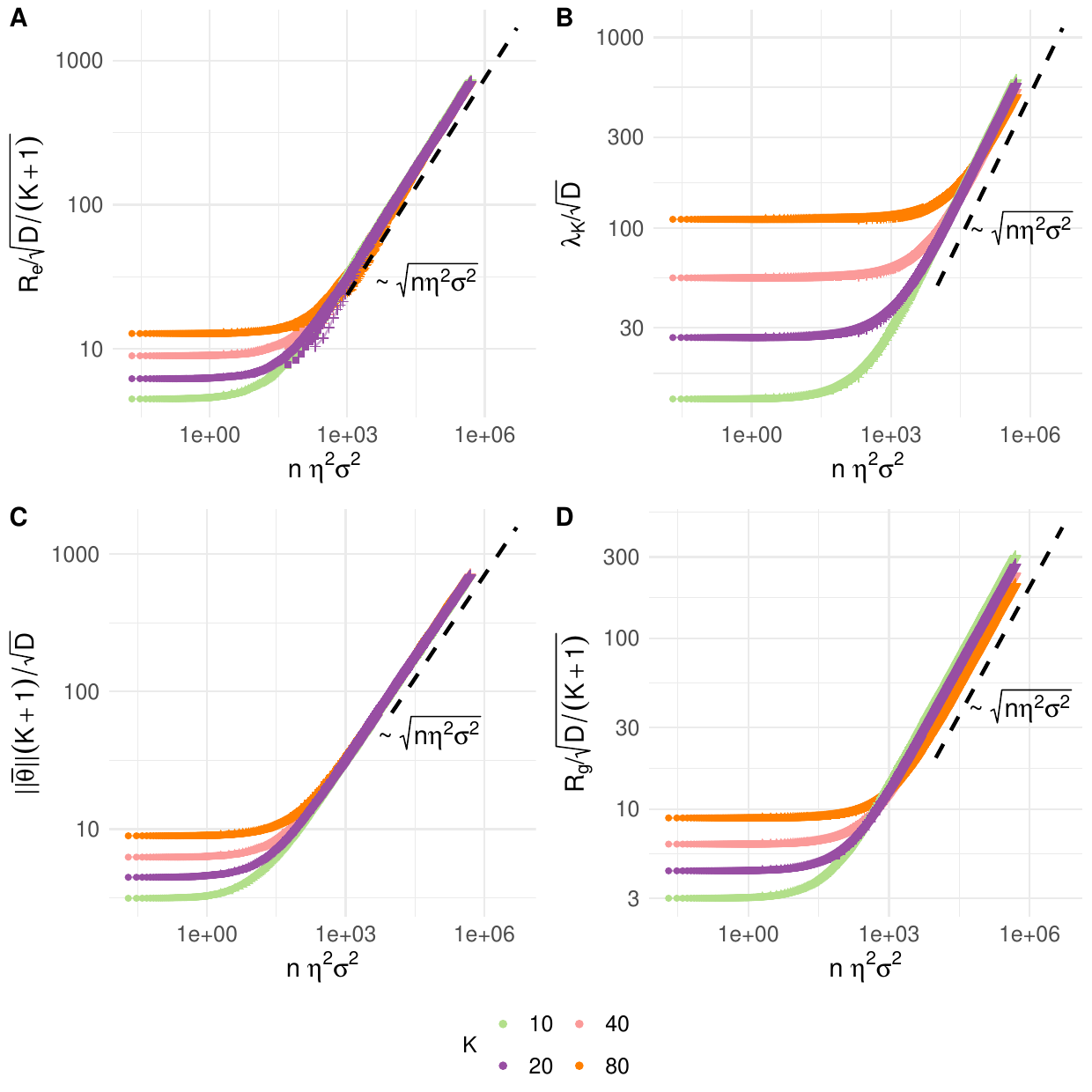} 
    \caption{Simulation study and scaling laws: subplot (A) $\re$, (B) $\lK$, (C) $\norm{\pcom}$ without substracting starting point (D) $\rg$. Quantities in A-D plotted to show scaling behavior.}
    \label{fig:scaling_laws}
\end{figure}

\subsection{Comparison with Path Optimization}
\label{sup:scaling_exp}
This subsection compares our approach to path optimization methods using \cref{alg:subspace_cons}, expanding on the main results. Figure \ref{fig:length_scaling_appendix} illustrates the metrics $\rg$ and $\lK$, which are not included in \cref{fig:length_scaling}. We can observe the scaling behavior with respect to $\sqrt{n}$ for large epochs $n$. Table~\ref{tab:scaling_comparison} summarizes and compares our theoretical findings to the scaling observed in our simplified model.
\begin{figure}
\centering
\includegraphics[width=.4\linewidth]{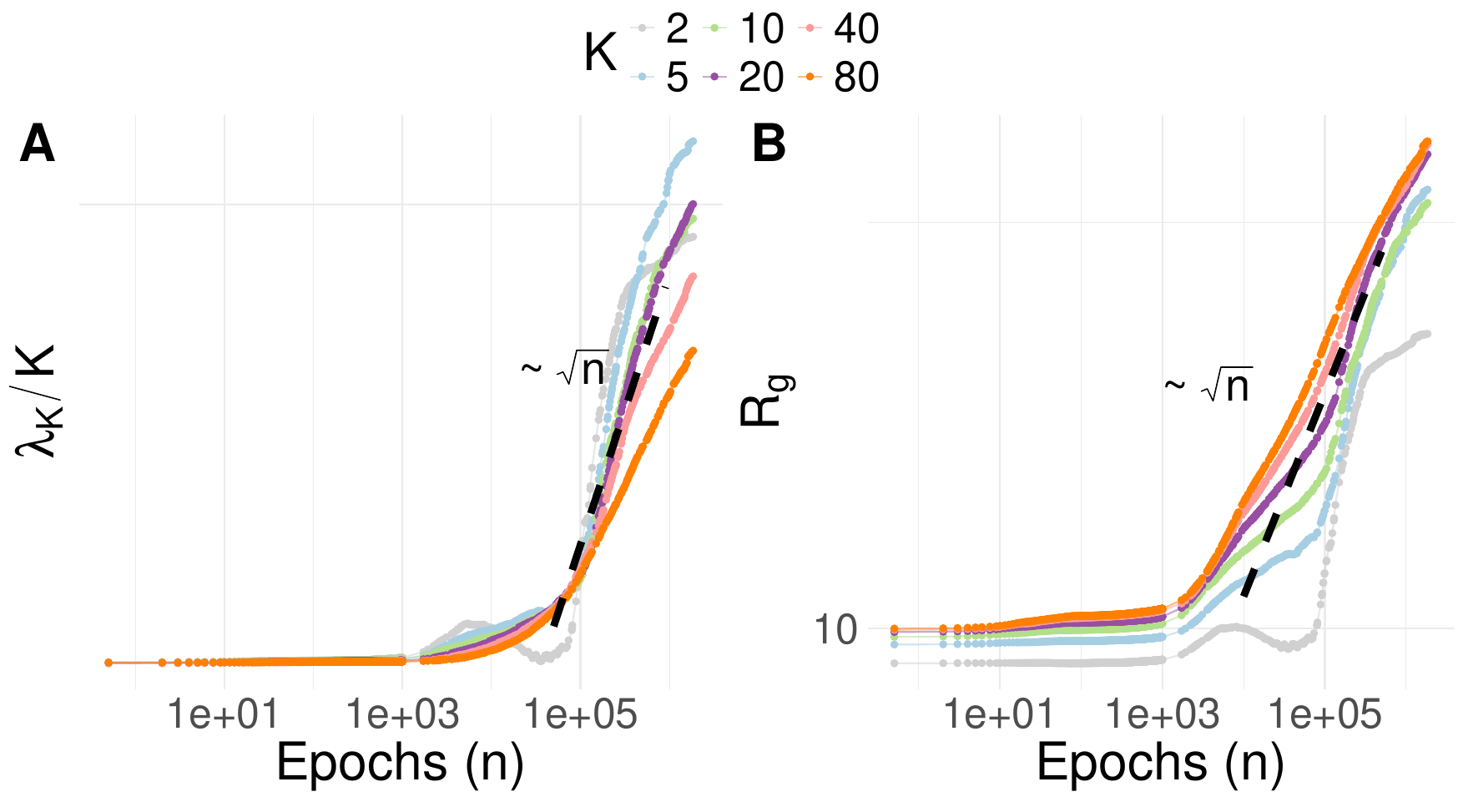}
\caption{Additional results to \cref{fig:length_scaling} for the path optimization with zero weight decay.}
\label{fig:length_scaling_appendix}
\end{figure}
While the scaling with respect to the time constant appears to be approximately $\propto \sqrt{n}$, we observe notable deviations in the scaling with $K$ and $D$. These differences are expected due to the complex dynamics introduced by the Adam optimizer.  
\begin{table}[H]
\centering
\begin{tabular}{lcc}
\toprule
\textbf{Quantity} & \textbf{Simplified Model Scaling} & \textbf{Simulation Scaling} \\
\midrule
Radius of Gyration \( R_g \) &
\( R_g \propto \sqrt{n \, \eta^2 \sigma_n^2} \cdot \sqrt{\frac{D}{K+1}} \) &
\( R_g \propto \sqrt{n} \)\\[2ex]

End-to-End Distance \( R_e \) &
\( R_e \propto \sqrt{n \, \eta^2 \sigma_n^2} \cdot \sqrt{\frac{D}{K+1}} \) &
\( R_e \propto \sqrt{n} \) \\[2ex]

Center of Mass \( \|\bar{\w}\| \) &
\( \|\bar{\w}\| \propto \sqrt{n \, \eta^2 \sigma_n^2} \cdot \frac{\sqrt{D}}{K+1} \) &
\( \|\bar{\w}\| \propto \sqrt{n} \) \\[2ex]

Total Path Length \( \lambda_K \) &
\( \lambda_K \propto \sqrt{n \, \eta^2 \sigma_n^2 D} \) &
\( \lambda_K \propto K \) \\
\bottomrule
\end{tabular}
\caption{Comparison of scaling behaviors between the simplified model and simulations.  }
\label{tab:scaling_comparison}
\end{table}

To further evaluate our assumption that the curve lies in a low-loss valley and that the influence of the objective vanishes, we analyze the gradient norm for all control points during training. In \cref{fig:length_scaling_gradient_norm}, we empirically observe that the sum of the gradient norms of all control points reaches its minimum after approximately 100 epochs, which corresponds to the loss stagnation in Panel C of \cref{fig:length_scaling}. This observation holds consistently across different subspace dimensions $K$. 

Note that, according to an early stopping criterion based on the validation loss, training would typically stop within a range of $10^2$ to $10^4$ epochs. Therefore, the increasing training instability observed in the gradient norm and training loss in Panel C of \cref{fig:length_scaling} is primarily a theoretical concern, as it arises only at around $10^5$ epochs.
\begin{figure}
    \centering
    \includegraphics[width=0.9\linewidth]{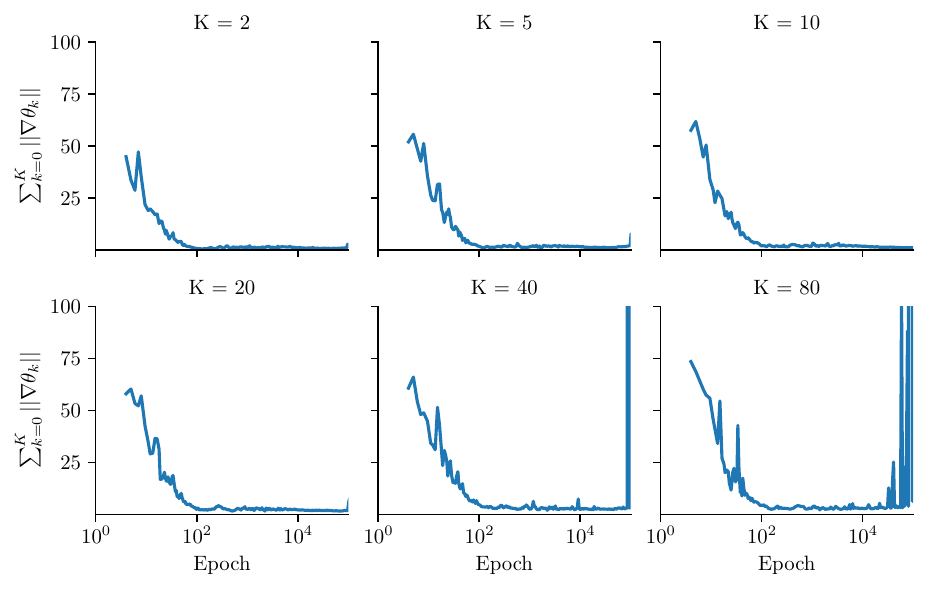}
    \caption{Additional results corresponding to \cref{fig:length_scaling}, visualizing the gradient norm 
    $\sum^K_{k=0} ||\nabla\w_k||_2$ (y-axis) over the course of training epochs (x-axis). 
    The color shading represents the $\pm$ standard deviation, computed from five independent repetitions 
    and logarithmic binning of the epochs.
    }
    \label{fig:length_scaling_gradient_norm}
\end{figure}

Additional results for non-zero weight decay are presented in \cref{fig:scaling_wd_appendix_wd}.
\begin{figure}[h!]
    \centering
    \includegraphics[width=0.9\textwidth]{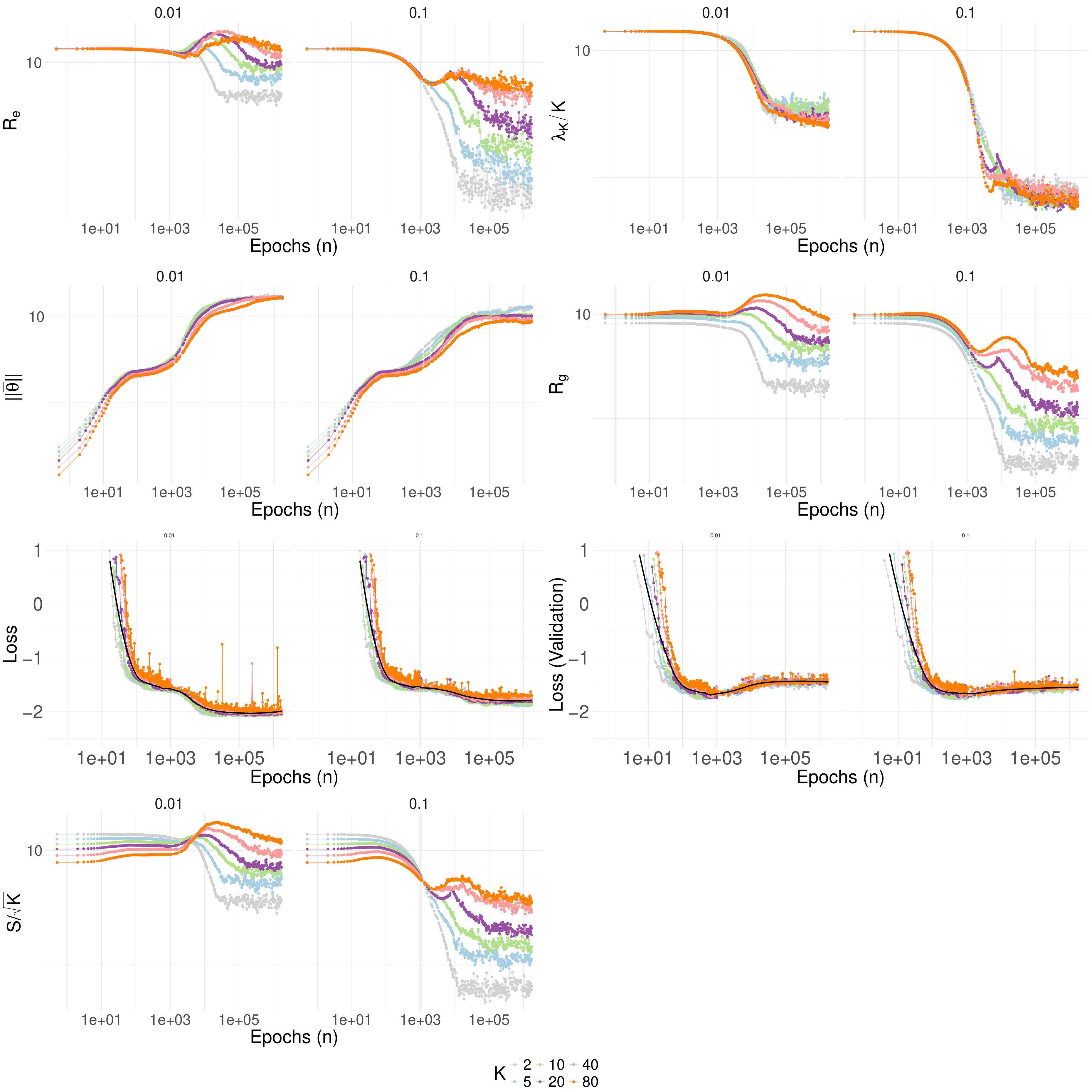} 
    \caption{Additional results for non-zero weight decay $0.01$ and $0.1$}
    \label{fig:scaling_wd_appendix_wd}
\end{figure}

\clearpage

\section{Theoretical Results} \label{app:theory}

\begin{theorem}[Permutation Symmetries]\label{thm:perm-inv}
    Assume each $\w_{\kk}\in \paramset$ satisfies $\w_{\kk,i}<\w_{\kk,i+1}$ for all its entries
    $i=1,...,\D-1$, where $\w_{\kk,i}$ denotes the $i$-th component of $\w_{\kk}$.
    Define $S_\D$ as the permutation group acting on $\mathbb{R}^\D$ by permuting 
    the components. 
    For $\epsilon>0$ define $$\bez_{\paramset}([0,1])_\epsilon:=\{\w\in \mathbb{R}^\D\colon \exists t\in [0,1] \text{ s.t. } ||\bez_{\paramset}(t)-\w||<\epsilon\}$$
    as the ``tunnel'' of radius $\epsilon$ around the curve $\bez_{\paramset}$.
    
    Then, there exists some $\epsilon>0$ such that for all $g\in S_\D$ and 
    all parameters $\w \in \bez_{\paramset}([0,1])_\epsilon$ it holds $g\w\notin \bez_{\paramset}([0,1])_\epsilon$.
\end{theorem}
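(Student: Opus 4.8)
The plan is to exploit the fact that the hypothesis $\w_{\kk,i}<\w_{\kk,i+1}$ is an open condition, so that the whole B\'ezier curve inherits a uniform ``strict monotonicity margin'' which no nontrivial coordinate permutation can preserve. Concretely, I would first observe that for each fixed $t$, $\bez_{\paramset}(t)=\sum_{\kk}\omega_\kk(t)\w_\kk$ is a \emph{convex} combination of the control points (the weights $\omega_\kk(t)=\binom{\K}{\kk}(1-t)^{\K-\kk}t^\kk$ are nonnegative and sum to $1$ on $[0,1]$). Hence consecutive-coordinate gaps are preserved in the sense that $\bez_{\paramset}(t)_{i+1}-\bez_{\paramset}(t)_i=\sum_\kk \omega_\kk(t)\,(\w_{\kk,i+1}-\w_{\kk,i})\ge \min_\kk(\w_{\kk,i+1}-\w_{\kk,i})>0$. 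Taking the minimum over $i$ and over $\kk$, set
\[
\delta:=\min_{0\le \kk\le \K}\ \min_{1\le i\le \D-1}\bigl(\w_{\kk,i+1}-\w_{\kk,i}\bigr)>0,
\]
so that every point on the curve satisfies $\bez_{\paramset}(t)_{i+1}-\bez_{\paramset}(t)_i\ge\delta$ for all $i$; i.e.\ the curve lies in the open cone $C:=\{\w:\w_i+\delta\le\w_{i+1},\ i=1,\dots,\D-1\}$.

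Next I would propagate this to the tube. If $\w\in\bez_{\paramset}([0,1])_\epsilon$ with $\epsilon<\delta/2$, pick $t$ with $\|\bez_{\paramset}(t)-\w\|<\epsilon$; then each coordinate moves by less than $\epsilon$, so $\w_{i+1}-\w_i>\delta-2\epsilon>0$ for all $i$. Thus the entire tube $\bez_{\paramset}([0,1])_\epsilon$ consists of \emph{strictly increasing} vectors. The final step is the permutation argument: for any $g\in S_\D$ with $g\neq\mathrm{id}$, $g$ does not preserve the strictly-increasing ordering — there is some index $j$ with $(g\w)_{j}>(g\w)_{j+1}$ whenever $\w$ is strictly increasing, because a nontrivial permutation must invert at least one adjacent pair of ranks. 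Hence $g\w\notin C\supseteq\bez_{\paramset}([0,1])_\epsilon$, and since $g$ maps the tube into the region of vectors with at least one descent, $g\w\notin\bez_{\paramset}([0,1])_\epsilon$. For $g=\mathrm{id}$ there is nothing symmetry-breaking to show, but note the statement as written would be false for $g=\mathrm{id}$ (every point is trivially a permutation of itself); I would therefore either restrict to $g\neq\mathrm{id}$ in the formal statement or phrase the conclusion as ``no \emph{nontrivial} permutation of a tube point lies in the tube,'' which is the intended content.

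The only genuinely delicate point is the elementary combinatorial claim that a nontrivial permutation always inverts some \emph{adjacent} pair of a strictly increasing tuple; this follows because the identity is the unique permutation with no such adjacent inversion (otherwise one could sort by adjacent transpositions of already-ordered elements, a contradiction), but it is worth stating cleanly. Everything else — convexity of the B\'ezier weights, the $\epsilon$-perturbation bound, and choosing $\epsilon<\delta/2$ — is routine. I would also remark that $\delta>0$ is finite and explicit, so the result is quantitative: any $\epsilon<\tfrac12\min_{\kk,i}(\w_{\kk,i+1}-\w_{\kk,i})$ works.
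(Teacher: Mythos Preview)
Your proposal is correct and follows essentially the same route as the paper: define $\delta=\min_{\kk,i}(\w_{\kk,i+1}-\w_{\kk,i})$, use convexity of the B\'ezier weights to show the curve inherits this gap, choose $\epsilon<\delta/2$ so the tube consists of strictly increasing vectors, and conclude that a permutation destroys the ordering. Your observation that the statement requires $g\neq\mathrm{id}$ is a genuine refinement the paper's proof glosses over (it asserts ``any permutation of $\w$ is not ordered,'' which is of course false for the identity), and your explicit justification that a nontrivial permutation must invert some adjacent pair is cleaner than the paper's one-line appeal.
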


\begin{proof}[Proof of \Cref{thm:perm-inv}]
    By assumption, it holds $0<\min_{\kk=1,...,K,i=1,...,\D-1}\w_{\kk,i+1}-\w_{\kk,i}$.
    Thus, there exists some $0<\epsilon<\min_{\kk=1,...,K,i=1,...,\D-1}\w_{\kk,i+1}-\w_{\kk,i}/2$.
    It suffices to prove that any $\w\in \bez_{\paramset}([0,1])_\epsilon$ is ordered,
    i.e. $\w_i<\w_{i+1}$ for all $i=1,...,\D-1$.
    Then, any permutation of $\w$ is not ordered and, therefore, not an element of 
    $\bez_{\paramset}([0,1])_\epsilon$.

    Let $\w\in \bez_{\paramset}([0,1])_\epsilon$. 
    By definition, there exists some $t\in [0,1]$ such that $||\bez_{\paramset}(t)-\w||<\epsilon$. 
    In particular,
    \begin{align*}
        \w_{i+1}-\w_{i}>\bez_{\paramset}(t)_{i+1}-\bez_{\paramset}(t)^{i}-2\epsilon.
    \end{align*}
    Note that 
    \begin{equation*}
        \min_{i=1,...,\D-1}\bez_{\paramset}(t)^{i+1}-\bez_{\paramset}(t)^{i}
        \geq 
        \min_{\kk=1,...,K,i=1,...,\D-1}\w_{\kk,i+1}-\w_{\kk,i}
        >2\epsilon>0
    \end{equation*}
    for all $t$ since $\bez_{\paramset}(t)$ is a finite linear combination of ordered vectors $\w_\kk$ with positive scalars (see \eqref{eq:bezier}) and 
    $\sum_{\kk=0}^\K \binom{\K}{\kk} (1-t)^{\K-\kk} t^{\kk}=1$. 
    Thus,
    \begin{align*}
        \w_{i+1}-\w_{i}>\bez_{\paramset}(t)_{i+1}-\bez_{\paramset}(t)_{i}-2\epsilon >0
    \end{align*}
    which proves the claim. 
\end{proof}

\section{Further Implementation Details} 
\label{app:implem}

All implementation details, including the full set of hyperparameter configurations, model architecture specifications, and training procedures, are also provided in the supplementary code at \href{https://github.com/doldd/Paths-and-Ambient-Spaces}{https://github.com/doldd/Paths-and-Ambient-Spaces}, with a dedicated README file to navigate through the experiments.

\subsection{Sampling Algorithm} \label{app:samplalg}

\cref{alg:subspace_cons_sampling} outlines the implementation of our tunnel-guided sampling routine, using Metropolis-Hastings as an example. The routine is, however, sampler-agnostic, and other sampling methods can also be applied. In our experiments, we primarily use the NUTS \citep{hoffman14a} sampler or the more efficient MCLMC \citep{robnik2023microcanonical} sampler, with the implementation from the BlackJAX library.

\begin{center}
\begin{minipage}{0.6\textwidth}
\centering
\begin{algorithm}[H]
\caption{Sampling}
\label{alg:subspace_cons_sampling}
\begin{algorithmic}
    \STATE \textbf{define} $\btunsp$-space from $\paramset^\ast$ 
    \STATE \textbf{initialize} Local coordinate system for RMF in $\btunsp$-space
    \FOR{\#Chains \& \#Samples}
        \STATE Draw $t$ and $\btun$ proposal
        \STATE Compute $\varw = \proj  g(t,\btun) $
        \STATE Compute likelihood $p(\mathcal{D}|\varw)$ and prior $p(t, \btun)$
        \STATE Adjust prior with $\log | \det({\partial g(t,\btun)} / {\partial t,\btun}) |$
        \STATE Accept/reject proposal according to MH
    \ENDFOR
\STATE \textbf{return} Samples from {$p(t, \btun |\mathcal{D})$}
\end{algorithmic}
\end{algorithm}
\end{minipage}
\end{center}

\section{Experimental Details} \label{app:exp_details}

\subsection{Datasets} \label{app:datasets}

\subsubsection{Simulated Datasets} \label{app:simdata}

Our approach aligns with the approach by \cite{wilson2020bayesian}. Specifically, we randomly initialize neural network weights $\w_\textrm{gen}$, using the same architecture as in our later experiments, and generate the target labels as $y_i = f_{\w_\textrm{gen}}(x_i) + \epsilon$, where $\epsilon \sim N(0, 0.05)$. Both $x$ and the resulting target $y$ are one-dimensional for easy visualization. For the feature $x_\textrm{train}$ of the training dataset, we sample values uniformly distributed between $-2$ and $2$. We exclude values between $-0.6$ and $0.6$ to simulate in-between uncertainty, resulting in 70 training data points. Similarly, we generated 18 validation data points, which are in the same range as the training data, and 33 test data points in an extended range (cf.~\cref{fig:synthetic_data}). This data generation process was repeated with newly initialized $\w_\textrm{gen}$ for each repetition of the synthetic dataset experiment. \cref{fig:synthetic_data} shows all the generated data.

\begin{figure}[ht]
    \centering
    \includegraphics[width=0.9\linewidth]{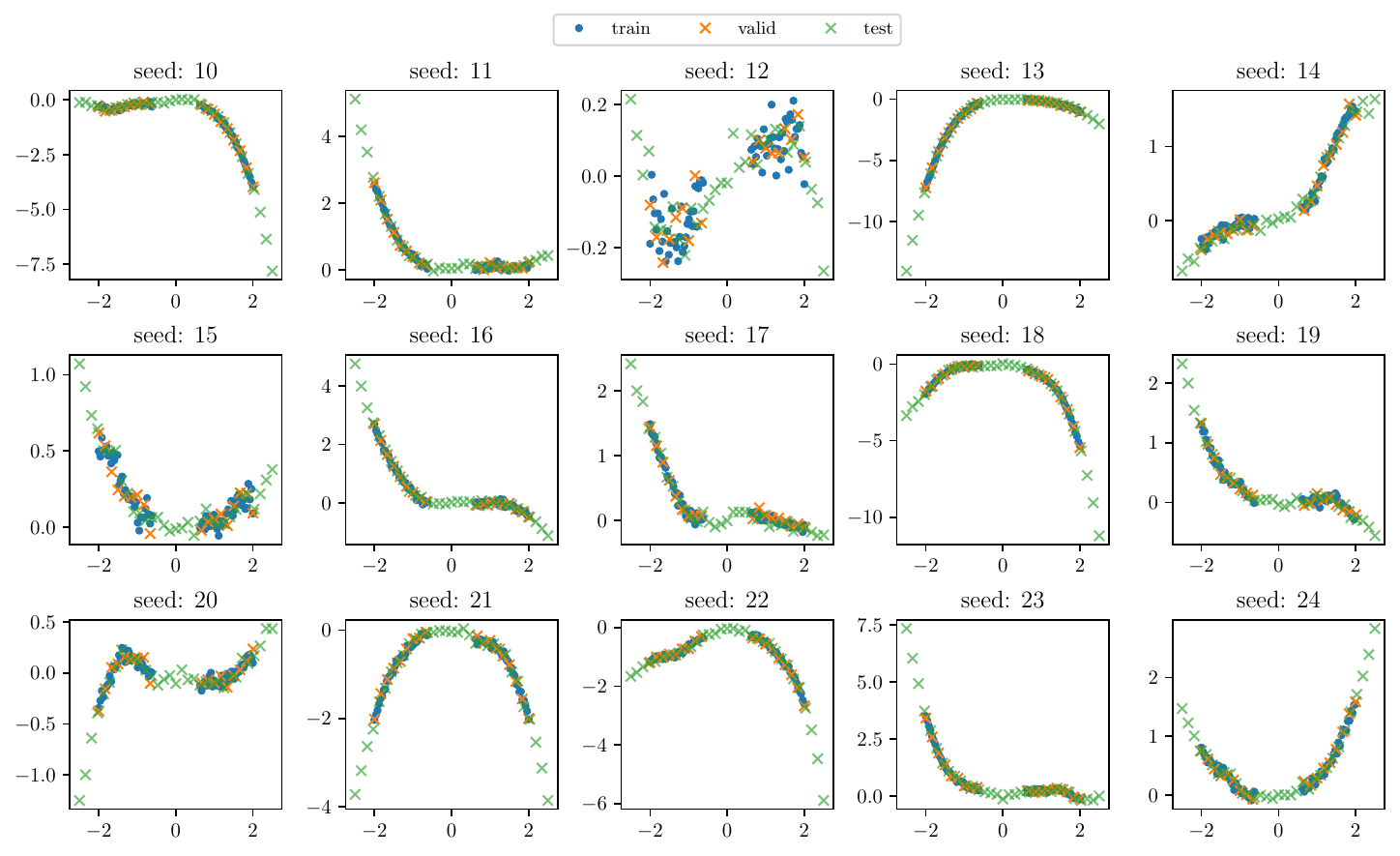}
    \caption{Synthetic generated regression data using a fully connected neural network with three hidden layers, each with 16 neurons and ReLU activation function.}
    \label{fig:synthetic_data}
\end{figure}

\subsubsection{Benchmark Datasets} \label{app:benchdata}

\cref{tab:dataoverview} provides an overview of the benchmark data sets used in our experiments.

\begin{table}[h]
\begin{small}
\begin{center}
\caption{Overview of the used  benchmark datasets} \label{tab:dataoverview}
\vskip 0.1in
\resizebox{0.9\columnwidth}{!}{%
\begin{tabular}{lrrrrl}
Data set & Task & \# Obs. & Feat. & Reference \\ \hline 
airfoil & Regression & 1503 & 5 &  \citet{Dua.2019}  \\
bikesharing & Regression & 17379 & 13 & \citet{misc_bike_sharing_dataset_275}  \\
concrete   & Regression & 1030 & 8 &  \citet{Yeh.1998} \\
energy  & Regression & 768 & 8 &  \citet{Tsanas.2012} \\
yacht  & Regression & 308 & 6 & \citet{Ortigosa.2007,Dua.2019} \\
MNIST & Multi-Class. & 60000 & 28x28 & \citet{lecun-mnisthandwrittendigit-2010} \\
\hline
\end{tabular}
}
\end{center}
\end{small}
\end{table}

We standardize the input data for the UCI regression tasks and also compute the LPPD (cf.~\cref{eq:meanlppd}) on the standardized data. For the MNIST dataset, we use 5-fold cross-validation on the train and validation set, where the 60,000 observations were split into one fold of $\num{48000}$ training data and $\num{12000}$ validation data, following their original order. The LPPD and RMSE for MCMC differ from the values reported in \citep{sommer24connecting} because they used a heteroscedastic approach, whereas we modeled the outcome distribution in a homoscedastic fashion. This choice was made because our focus is not on performing distributional regression, and heteroscedastic optimization comes with its own set of challenges.


\subsection{Methods}

\paragraph{Methods for \cref{sec:exp_tunnelVsVolume} to \cref{sec:regression_benchmarks}} 

In experiments of \cref{sec:exp_tunnelVsVolume,sec:regression_benchmarks}, our {model is defined} as a fully connected network with three hidden layers, each with 16 neurons. As an activation function, we use ReLU for the regression benchmarks in \cref{sec:regression_benchmarks}. In \cref{sec:exp_tunnelVsVolume} and \cref{sec:exp_tunnelSmoothness}, we use both ReLU and ELU activations. The only difference in the model architecture for the simulated data and the UCI benchmark data is the inclusion of a feature expansion for the simulated data, where we extended $x$ to the feature vector $[x, x^2, x^3]$.

For the Laplace Approximation (\textbf{LA}) method, we first train MAP solutions using the Adam optimizer with regularization for $\num{10000}$ epochs with a learning rate of $0.005$ and then use last-layer LA with a generalized Gauss-Newton Hessian approximation and a closed-form predictive approximation over five different initializations and splits \citep{daxberger2021laplace}. \\

For the deep ensembles (\textbf{DE}), we train 10 MAP solutions with different initializations using the training recipe described above for LA and average the predictions to obtain the performance results.\\

For the \textbf{ModeCon} method, we implement the approach described in \citet{garipov2018loss}, where we train two MAP-solutions $\w_{0}, \w_{2}$ with different initializations as start- and endpoint of a B\'{e}zier curve and add a third randomly initialized model, $\w_{1}$, to obtain a quadratic B\'{e}zier curve with three control points, given by $\w_{0}, \w_{1}, \w_{2}$. In the subsequent curve training step, we keep the endpoints (MAP solutions) fixed and only apply loss-gradient-based updates to $\w_{1}$. We use early stopping by evaluating the path loss $L(\w_{0}, \w_{1}, \w_{2})$ at 50 uniformly spaced-out positions of $t$. After training the curve, we employ volume lifting as described in Section \ref{sec:tunnel_emb_lifting} and use the NUTS sampler to obtain posterior samples from the hyperplane. We use an isotropic Gaussian prior for the parameters of the hyperplane, consistent with \citet{izmailov2020subspace}. Last, we define the orthogonal projection matrix $\bm{\Pi}$ using a principal component analysis of the centered control points $\w_{0}, \w_{1}, \w_{2}$, keeping the first two principal components to project the samples from the hyperplane to a neural network weight $\w\in\W$.\\

For our \textbf{Tunnel-2} method, we first initialize $\paramset = \{\w_0, \w_1, \w_2\}$ randomly and optimize these parameters jointly as described in \cref{alg:subspace_cons}. We use the Adam optimizer and manually adjust the learning rate to stabilize the training loss. For the simulated data, we choose a learning rate of 0.001. For the UCI benchmark, the learning rate is set to 0.005. We run the entire training process for $\num{100000}$ epochs and the optimal parameters $\paramset^\ast$ are selected based on the validation performance of the curve. To compute the validation performance, we approximate the overall curve performance by averaging the log-likelihood evaluated at 1,000 evenly spaced grid points between zero and one for the curve parameter $t$.

For all consecutive sampling evaluations (tunnel vs.\ volume lifting, different temperatures), we use the same optimized path defined by $\paramset^\ast$, ensuring that no differences arise from variations in the optimized path. We use the MCLMC sampler from \cite{robnik2023microcanonical,sommer2025microcanonical} with 10 chains, 1000 warm-up samples, and 1000 collected samples. To reduce autocorrelation between samples and maintain comparability with NUTS, we increase the total number of samples by a factor of 100 but retain only every 100th sample. This simulates the leapfrog integration used in NUTS while maintaining a fixed computational budget.\\

In our \textbf{Tunnel-K} method, the setup is the same as in the Tunnel-2 configuration, with the difference that $\paramset$ consists of $K+1$ parameters instead of three, and a ($K-1$)-dimensional $\btun$ during sampling.
 \\

\paragraph{Methods for \cref{sec:exp_mnist}} 

For the \textbf{LA} method, we train a MAP solution using the Adam optimizer with regularization for $\num{5}$ epochs with a batch size of $\num{64}$ and a learning rate of $0.001$. We use a last-layer LA with a generalized Gauss-Newton Hessian approximation and the Monte Carlo predictive approximation over five different initializations and splits \citep{daxberger2021laplace}. \\

For the \textbf{Tunnel-$K$} method, we optimize the path using an Adam optimizer with a batch size of 480, over 50 epochs, and a learning rate of 0.005. We reduce the number of collected samples to 500, with 500 warm-up samples and 10 chains. Additionally, we add 10 extra steps per sample to simulate the behavior of the NUTS sampler within the MCLMC sampler. All other configurations remain the same as in the other experiments. 

\subsection{Performance Evaluation}

To evaluate the performance of the subspace inference approaches on a set of previously unseen test data, we use the log pointwise predictive density (LPPD), which takes into account the posterior distribution \citep{gelman2014understanding}:
\begin{equation}
\text{LPPD} = \sum_{i=1}^{n_{obs}} \log \int p(y_i | \bm{\theta} ) p(\bm{\theta} | y_i ) d\bm{\theta}
\end{equation}
In practical sampling-based inference, we evaluate the expectation using (vector-valued) draws from the posterior, $\bm{\theta}^{(u)}, u = 1, \dots, U$, and normalize the $\text{LPPD}$ by the number of samples $n_{obs}$ to obtain a metric that is independent of the dataset size: 
\begin{equation} \label{eq:meanlppd}
     \widehat{\text{LPPD}} = \frac{1}{n_{obs}} \sum_{i=1}^{n_{obs}} \log \left(\frac{1}{U} \sum_{u=1}^{U} p(y_i | \bm{\theta}^u) \right)
\end{equation}

\subsection{MCMC Sampling Evaluation}
\label{app:sampling_eval}

For the evaluation of the drawn posterior samples, we use the effective sample size \citep[ESS;][]{vehtari2021rank} and \^{R} \citep{gelman2014understanding}, where ESS is given by
\begin{equation}
    \text{ESS} = \frac{U}{1 + 2 \sum_{t=1}^{\infty} \rho_t}
\end{equation}
and $\rho_t$ denotes the autocorrelation in a single MCMC chain at different lags $t$ \citep{vehtari2021rank, stanrefmanual}. The ESS of an MCMC chain quantifies the number of independent samples that an autocorrelated sequence effectively represents. 

The potential scale reduction metric $\hat{R}$ is a convergence measure that compares within-chain variance $W$ and between-chain variance $B$ to assess chain convergence. When all chains have reached equilibrium, these variances will be equal, and $\hat{R}$ will be equal to $1$. However, if the chains have not converged to a common distribution, the statistic will exceed one \citep{stanrefmanual}. 
Following the definition put forth by the \citet{stanrefmanual}, on $M$ different chains, each chain containing $U$ samples $\theta^{(u)}_m$, the between-chain variance is defined as
\begin{equation}
    B =
\frac{U}{M-1}
\,
\sum_{m=1}^M (\bar{\theta}^{(\bullet)}_{m}
                - \bar{\theta}^{(\bullet)}_{\bullet})^2
\end{equation}
where 
\begin{equation}
\bar{\theta}_m^{(\bullet)}
= \frac{1}{U} \sum_{u = 1}^U \theta_m^{(u)} 
\quad \text{and} \quad
\bar{\theta}^{(\bullet)}_{\bullet}
= \frac{1}{M} \, \sum_{m=1}^M \bar{\theta}_m^{(\bullet)}.
\end{equation}
The within-variance is given by
\begin{equation}
    W = \frac{1}{M} \, \sum_{m=1}^M v_m^2 \quad \text{with} \quad
    v_m^2
=
\frac{1}{U-1}
\, \sum_{u=1}^N (\theta^{(u)}_m - \bar{\theta}^{(\bullet)}_m)^2.
\end{equation}
The variance estimator $\widehat{\mbox{var}}^{+}\!(\theta|y)$ is a mixture of the within-chain and cross-chain sample variances,
\begin{equation}
    \widehat{\mbox{var}}^{+}\!(\theta|y)
= \frac{U-1}{U}\, W \, + \, \frac{1}{U} \, B,  \quad \text{with the final metric defined as} \quad \hat{R}
\, = \,
\sqrt{\frac{\widehat{\mbox{var}}^{+}\!(\theta|y)}{W}}.
\end{equation}
%
%

\subsection{Permutation Symmetries}
\label{app:permutation_symmetry}

As symmetric solutions would not differ in their functional outcomes, we inspect the functional outcome captured by the path along the curve in \cref{fig:functional_diversity_on_synthetic_data}. The figure shows that our path does not collapse to one functional realization and the functional outcome differs along the curve. We also observe that a smooth transition occurs in the functional outcome as we travel along the curve. 

Additionally, \cref{fig:functional_diversity_on_synthetic_data} shows that increasing the subspace dimension $K$ can improve the functional diversity captured by the path. Comparing the different path objectives, as discussed in \cref{sec:pathoptim}, does not reveal significant differences and is included primarily for completeness.

\begin{figure}
    \centering
    \includegraphics[width=0.8\linewidth]{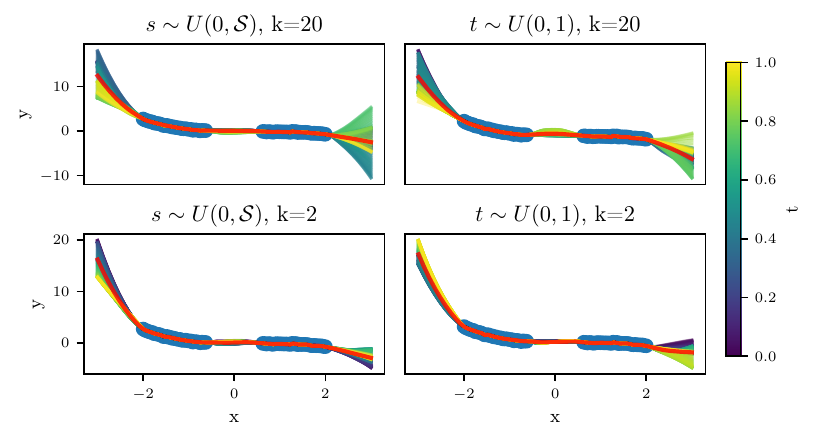}
    \caption{Investigating functional diversity on an exemplary synthetic dataset generated with ELU activation and seed $=16$. The columns compare two different path objectives: $t \sim U(0,1)$ and $s \sim U(0, \pathlen)$, while the rows compare subspace dimensions $K=2$ and $K=20$. The blue dots represent the training data, and the red line shows the averaged prediction along the entire curve. Each color corresponds to a single prediction at a specific $t$ or $s$ value. For simplicity, only the color legend for $t$ values is shown; the same color scheme applies to $s$.}
    \label{fig:functional_diversity_on_synthetic_data}
\end{figure}

To validate our \cref{thm:perm-inv}, we modify the MLP network to implicitly incorporate bias parameters that are sorted in each layer to obtain permutation symmetry-free solutions according to \cite{pourzanjani_2017_ImprovingIdentifiabilityb}. Sorting is achieved by optimizing the unrestricted bias parameter $\bm{B^\prime}$  for every control point $\w_k$ and converting it into the original bias parameter $\bm{B}$ using 
\begin{equation}
B_i = \sum_{j=0}^{i} \tilde{B}_j, \quad \text{where} \quad \tilde{B}_j = \begin{cases} B^{\prime}_0, & j = 0 \\ \ln(1 + e^{B^{\prime}_j}), & j > 0. \end{cases}
\end{equation}
The results of this experiment are visualized in \cref{fig:bias_ordering_of_bias_sorted_architecture}. We observe that the bias parameters are arranged in ascending order along the entire path defined by the B\'{e}zier curve. This indicates that adjusting only the control points results in a permutation-free path. This finding aligns with the theoretical framework established in \cref{thm:perm-inv}.

\begin{figure} 
    \begin{minipage}{0.9\textwidth} 
        \centering 
        \includegraphics[width=0.4\linewidth]{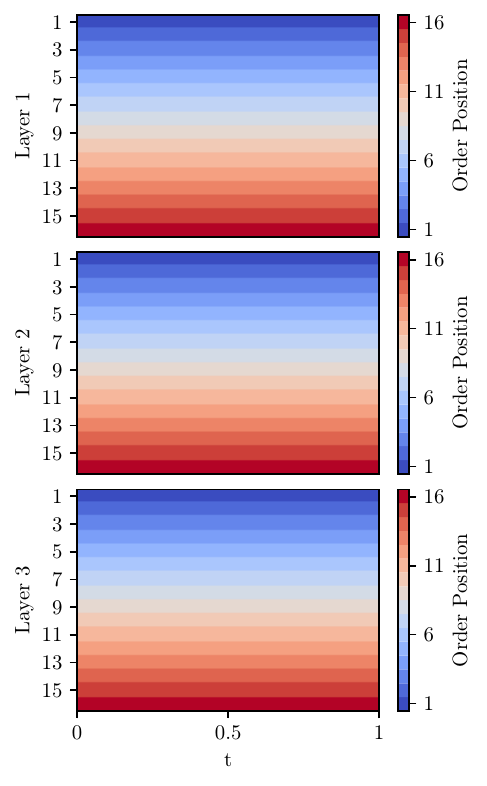} 
        \includegraphics[width=0.4\linewidth]{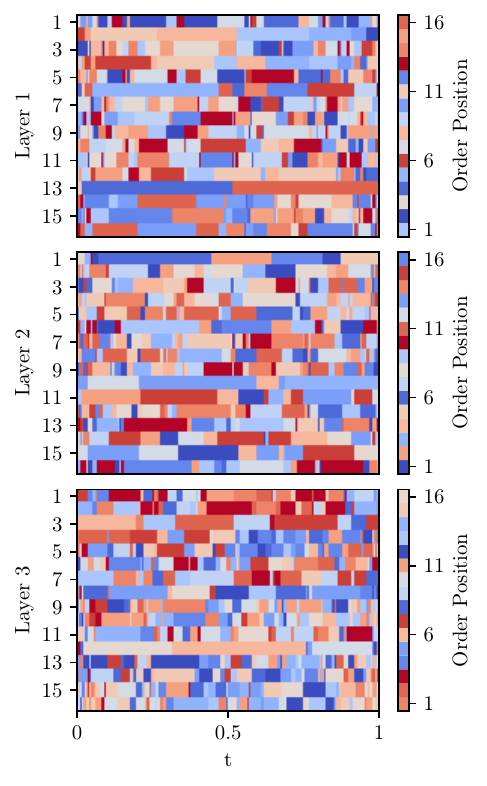}
    \end{minipage} 
    \caption{Left column: Bias ordering of a three-hidden-layer MLP with 16 neurons per layer. The Y-axis represents neurons in each layer (rows), color-coded according to their sorting index, while the X-axis represents the traversal along the path.
    Right column: Bias ordering in each layer when traversing the path without adjustment.} \label{fig:bias_ordering_of_bias_sorted_architecture} 
\end{figure}

To further investigate how the adjustment of permutation symmetry influences performance, we compared both approaches --- one with the adjustment of permutation symmetry by bias sorting and one without --- in the regression data set used in \cref{sec:exp_tunnelVsVolume}. In \cref{fig:valid_performance_bias_sorting}, we observe that validation performance slightly decreases when enforcing a sorted bias parameter in each layer. This suggests that even without adjustment, the path does not contain harmful permutation symmetries. If such symmetries were present, eliminating all permutation-symmetric solutions would be expected to improve performance.

\begin{figure} 
    \centering 
    \includegraphics[width=\linewidth]{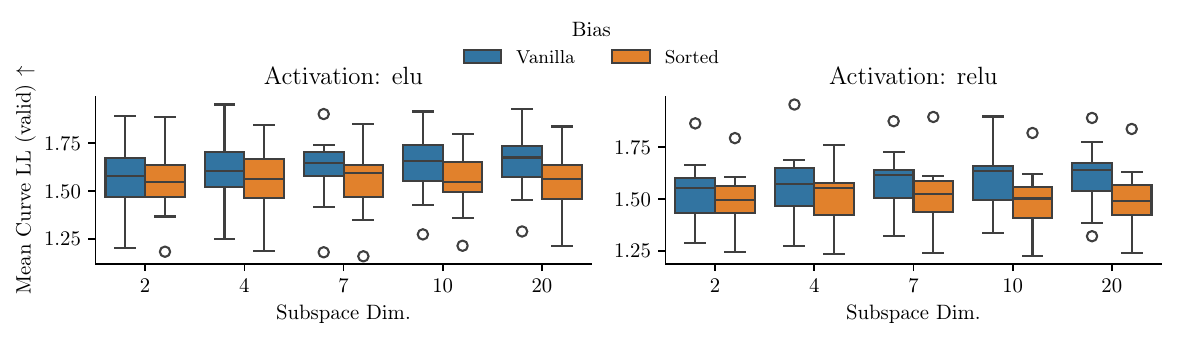} 
    \caption{Averaged log-likelihood over the path on the validation data for increasing subspace dimension (x-axis) and different activation functions (columns). The different hues compare cases where the bias parameter was sorted (orange) versus left unchanged (blue).} \label{fig:valid_performance_bias_sorting} 
\end{figure}

\subsection{Effects of Temperature}

In addition to introducing a tunnel prior, incorporating a temperature parameter in the sampling process can help in better guiding the sampler through the tunnel.
More specifically, by down-weighting the influence of the likelihood during sampling, we can facilitate better exploration of the tunnel, as the posterior distribution often becomes overly concentrated around the maximum likelihood estimate (MLE). 
This excessive concentration, in turn, can lead to overly confident uncertainty estimates by failing to explore the whole tunnel. 
As in other practical applications of temperature, where it is often found to yield better predictive accuracy if the likelihood is tempered with $\frac{1}{T}$ \citep{izmailov2020subspace}, we implement temperature scaling of the likelihood as:
$p_{\text{temp}}(\bm{\theta}|\mathcal{D}) \propto p(\mathcal{D}|\bm{\theta})^{1/T}p(\bm{\theta})$ or, expressed in log-space,
\begin{equation}
    \log p_{\text{temp}}(\bm{\theta}|\mathcal{D}) = \frac{1}{T} \log p(\mathcal{D}|\bm{\theta}) + \log p(\bm{\theta}) + \text{const}.
\end{equation}
For $T=1$, the original posterior is recovered. As $T \rightarrow \infty$, $p_{\text{temp}}(\bm{\theta}|\mathcal{D})$ converges to the prior. The effect of $T$ on $\log p_{\text{temp}}(\bm{\theta}|\mathcal{D})$ is visualized in \cref{fig:path_temp}.

\begin{figure}[ht]
    \centering
    \includegraphics[width=.5\linewidth]{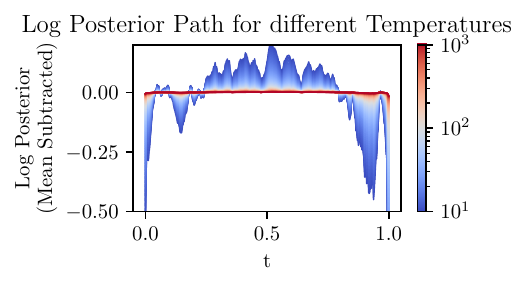}
    \caption{Unnormalized log posterior along the path for different temperature parameters using the B\'{e}zier curve trained on the yacht dataset. The averaged unnormalized log posterior along the curve was subtracted for each specific temperature value.}
    \label{fig:path_temp}
\end{figure}

\section{Additional Numerical Results} \label{app:furthres}





\subsection{Synthetic dataset}

\cref{fig:additional_synthethic_lppd_res} provides additional results for the experiments discussed in \cref{sec:exp_tunnelVsVolume}. While in \cref{sec:exp_tunnelVsVolume}, the temperature parameter is selected based on the best LPPD performance on the validation data, \cref{fig:additional_synthethic_lppd_res} shows the LPPD performance across all temperature values. Notably, the optimal temperature depends on the subspace dimension $K$, with larger $K$ values requiring lower temperatures. This observation aligns with the motivation in \cite{izmailov2020subspace}, which suggests adding temperature to account for the dimensional difference between the lower-dimensional subspace, where the prior is defined, and the high-dimensional weight space. As $K$ increases, these differences decrease, thus requiring smaller temperature parameters. 

The comparison between the tunnel and volume-lifting approaches shows that the tunnel prior is more appropriate, as it performs better in the high-temperature range. Additionally, comparing networks with ReLU vs.\ ELU activations reveals only a minor difference in performance.

\begin{figure}[ht]
    \centering
    \includegraphics[width=1.\linewidth]{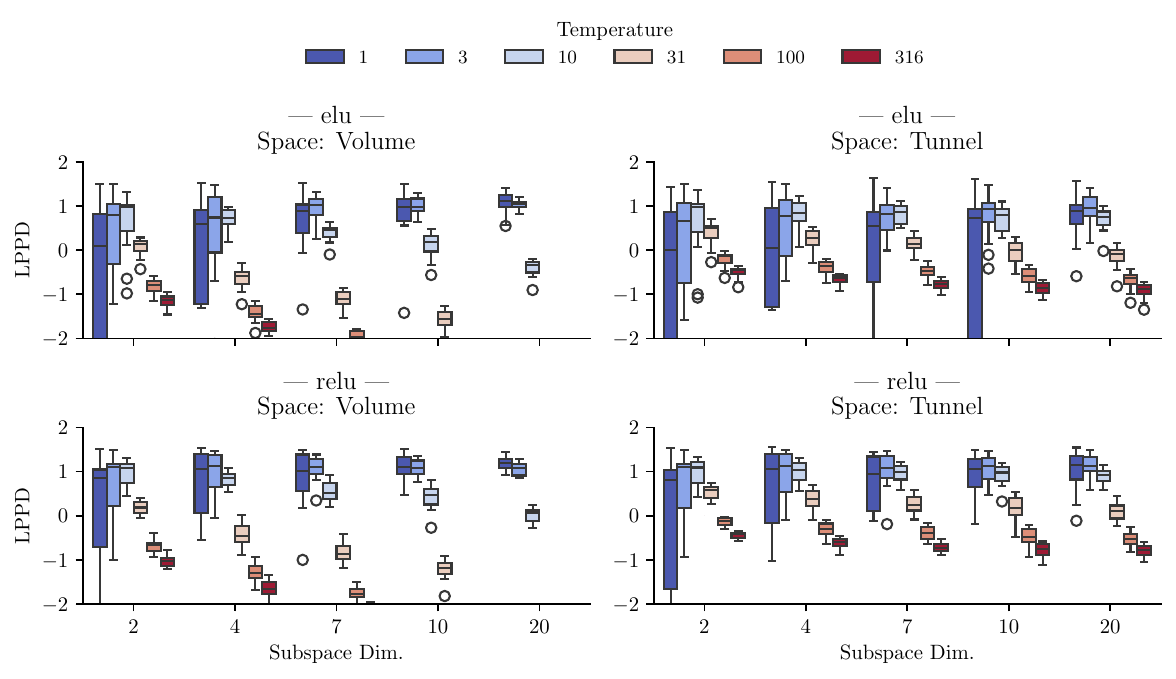}
    \caption{LPPD performance on the synthetic dataset using different temperature parameters. Each row compares a fully connected neural network with ELU activation versus ReLU activation and each column compares the volume versus tunnel-lifting approach. }
    \label{fig:additional_synthethic_lppd_res}
\end{figure}

\subsection{Benchmark dataset}

\cref{fig:exp_uci_all_metrics} shows different evaluated metrics for the benchmark dataset, depending on the temperature (x-axis), and compares the $K=2$ with the  $K=20$ tunnel lifting (colors). We see the same behavior as in \cref{fig:additional_synthethic_lppd_res}, i.e., $K=20$ requires lower temperatures. 

\begin{figure}[ht]
    \centering
    \includegraphics[width=.9\linewidth]{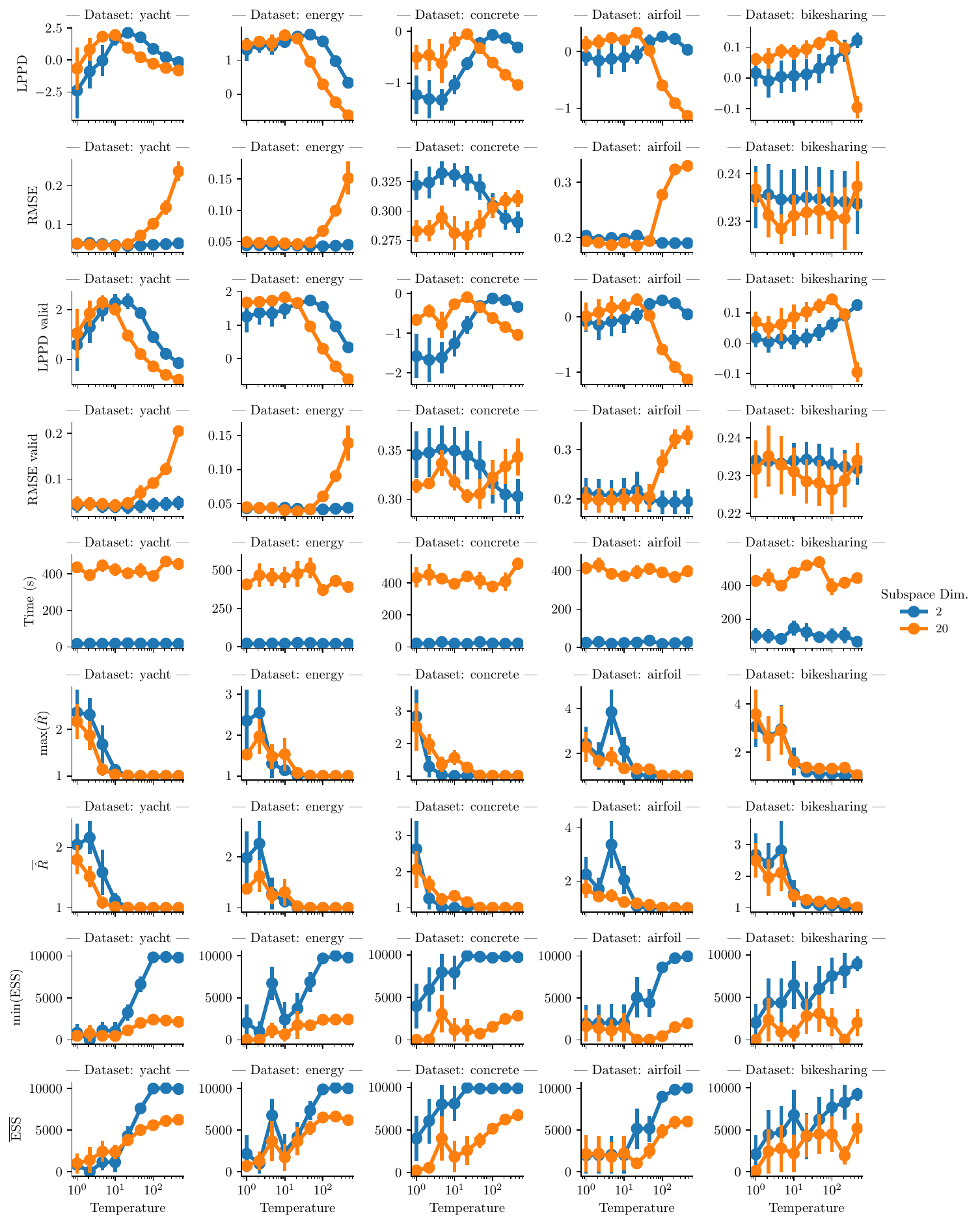}
    \caption{Different computed metrics (y-axis) versus temperature parameter (x-axis) for different datasets (columns). Colors represent the subspace dimension: $K=2$ (blue) and $K=20$ (orange). The error bars indicate the standard error across five repetitions. We compute the average and worst metrics for $\textrm{ESS}$ and $\hat{R}$ across all model parameters. The metric ``Time (s)'' represents the wall clock time, measured in seconds, to sample 10 chains with 1000 samples per chain.}
    \label{fig:exp_uci_all_metrics}
\end{figure}

\clearpage

\subsection{MNIST}
\label{app:mnist_temperature}

\cref{fig:exp_mnist_all_temps} presents the same evaluation as in \cref{fig:mnist}, but instead of using the best temperature selected based on the validation LPPD, the behavior across specific temperature parameters is visualized. We observe a trade-off between optimizing the temperature according to the LPPD metric and the accuracy, with accuracy favoring lower temperature values.

\begin{figure}[ht]
    \centering
    \includegraphics[width=0.6\linewidth]{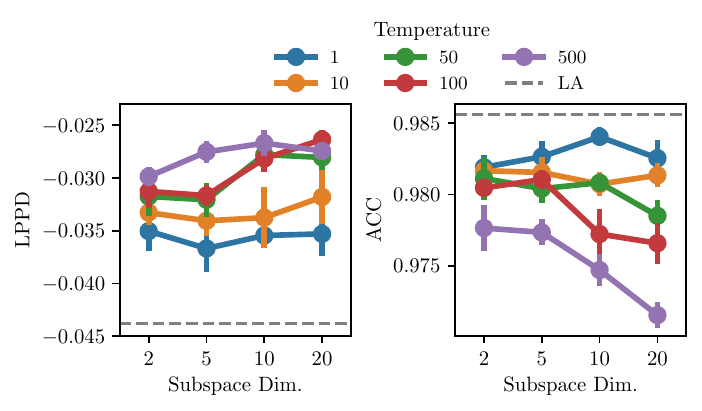}
    \caption{Performance on MNIST (with the standard error~over five different folds) with different temperature parameters (color) and subspace dimensions $\K$ (x-axis). The dashed line shows the results using Laplace Approximation (LA). The underlying path is optimized with $t \sim U(0,1)$ sampling.}
    \label{fig:exp_mnist_all_temps}
\end{figure}

\section{Computational Environment}\label{app:compenv}

The complexity of our dataset was intentionally kept manageable, allowing us to conduct experiments on consumer hardware, such as an NVIDIA 3080Ti GPU with 11GB of VRAM. While the 11GB of VRAM was not essential, it did improve runtime by enabling parallel execution of multiple chains during the sampling process. Only for the MNIST dataset, where we performed full-batch evaluations during sampling, we resorted to a Titan RTX GPU with 24GB of VRAM to accommodate the increased memory requirements.

From an implementation perspective, our experiments were conducted using the JAX library \citep{jax2018github}, in combination with Flax to specify the model architecture. The same model description was used to define our sampling model within the NumPyro framework, and sampling was performed using BlackJAX, which specified the sampler. For reproducibility, all runs were tracked using a local instance of ``Weights \& Biases'', which was also used to collect the data needed to generate the figures.

\end{document}